\newtheorem{secthm}{Theorem}[section]
\newtheorem{secrem}[secthm]{Remark}
\newcommand{\bP} { {\mathbb P}}
\begin{document}

\title[Article Title]{Fairness risk and its privacy-enabled solution  in AI-driven robotic applications}


\author[1]{\fnm{Le} \sur{Liu}}\email{le.liu@rug.nl}
\equalcont{These authors contributed equally to this work.}
\author[1]{\fnm{Bangguo} \sur{Yu}}\email{b.yu@rug.nl}
\equalcont{These authors contributed equally to this work.}
\author[2]{\fnm{Nynke} \sur{Vellinga}}\email{n.e.vellinga@rug.nl}

\author*[1,3]{\fnm{Ming} \sur{Cao}}\email{m.cao@rug.nl}

\affil[1]{\orgdiv{Faculty of Science and Engineering}, \orgname{University of Groningen}, \orgaddress{ \city{Groningen}, \postcode{9747 AG}, \country{The Netherlands}}}
\affil[2]{\orgdiv{Faculty of Law}, \orgname{University of Groningen}, \orgaddress{ \city{Groningen}, \postcode{9712 GH}, \country{The Netherlands}}}
\affil[3]{\orgdiv{ELSA(ethical,\ legal,\ societal aspects of AI) Lab for Technical Industry}, \orgname{Dutch Research Council}, \orgaddress{ \city{Groningen}, \postcode{9747 AG}, \country{The Netherlands}}}




\abstract{Complex decision-making by autonomous machines and algorithms could underpin the foundations of future society. Generative AI is emerging as a powerful engine for such transitions. However,  we show that Generative AI-driven developments pose a critical pitfall: fairness concerns. In robotic applications, although intuitions about fairness are common, a precise and implementable definition that captures user utility and inherent data randomness is missing. Here we provide a utility-aware fairness metric for robotic decision making and analyze fairness jointly with user-data privacy, deriving conditions under which privacy budgets govern fairness metrics. This yields a unified framework that formalizes and quantifies fairness and its interplay with privacy, which is tested in a robot navigation task. In view of the fact that under legal requirements, most robotic systems will enforce user privacy, the approach shows surprisingly that such privacy budgets can be jointly used to meet fairness targets. Addressing fairness concerns in the creative combined consideration of privacy is a step towards ethical use of AI and strengthens trust in autonomous robots deployed in everyday environments.}

\keywords{Robotic Decision-making, Large Language Model, Fairness, Privacy}



\maketitle

\section{Introduction}\label{sec1}
Fairness is a longstanding human concern: laws strive for equal treatment and justice~\cite{franck1998fairness}, as reflected in foundational legal instruments such as the Universal Declaration of Human Rights~\cite[Arts.~1 and~10]{udhr1948} and the Charter of Fundamental Rights of the European Union~\cite[Art.~35]{eu_charter}. Although fairness has long been examined in human decision-making, it remains insufficiently addressed in AI systems~\cite{teo2023measuring}. As AI increasingly powers robotic applications—from sensing and navigation to autonomous decision-making—it now shapes a growing range of consequential outcomes in modern robotics. The rapid adoption of such systems has outpaced understanding of their societal impacts, and mounting evidence of group-level disparities highlights persistent fairness concerns~\cite{weidinger2021ethical}. This makes it essential that principles of fairness extend to the design and deployment of AI-driven robotic systems.

In robotics, fairness concerns whether behaviours—such as classification, planning, control decisions, and human–robot interactions—systematically disadvantage individuals or groups defined by protected or socially salient attributes~\cite{hardt2016equality} (e.g., gender, race, age, disability, and religion), which may fall under high-risk categories in the EU Artificial Intelligence Act~\cite{EUAIAct2024}. Sources of unfairness span the entire robotics pipeline, from biased training data and modeling choices to deployment contexts that exacerbate disparities. These effects can accumulate, rendering a system unfair even if each stage introduces only minor bias~\cite{bender2021parrots}. To quantify such disparities, researchers have proposed complementary notions of group fairness~\cite{zemel2013learning,zhao2017men}, individual fairness~\cite{dwork2012fairness}, and contextual fairness~\cite{huo2025large}.

Privacy is another ethical concern for AI-driven robotic systems. Legal frameworks such as the Universal Declaration of Human Rights~\cite[Art.~12]{udhr1948} and the General Data Protection Regulation (GDPR)~\cite[Art.~5(1)(a)]{gdpr} underscore the fundamental importance of privacy. In practice, robotic systems face risks including unintended retention of sensitive sensor data and inference attacks on learned models. Noise-based mechanisms such as differential privacy~\cite{dwork2006calibrating} are commonly adopted safeguards and may also influence system behavior, creating opportunities to improve fairness while protecting sensitive information.

Large language models (LLMs), with parameter counts in the billions, have reshaped the paradigm of natural language processing. In contrast to earlier models such as GPT-2~\cite{radford2019language}, contemporary LLMs demonstrate substantially stronger abilities in natural language understanding and generation, owing to their scale, larger training corpora, and more sophisticated architectures~\cite{NEURIPS2022_b1efde53, survey}. Recent advances in large vision and language models (VLMs) have markedly improved the joint modeling of visual and textual information~\cite{openai2024gpt4o, NEURIPS2023_6dcf277e}. These models achieve strong performance on robotic tasks that require generating or verifying factual statements, supported by robust scene understanding~\cite{Gu2023, Kirillov_2023_ICCV} and reasoning capabilities~\cite{Chen_2024_CVPR, openai2024o1}. Among available systems, OpenAI’s GPT-family models~\cite{openai2024o1,openai2024gpt4o,openai2024gpt4omini} have been widely adopted in academic research due to strong empirical performance. Accordingly, we use an OpenAI GPT multimodal model as the vision-language engine in our study. We examine fairness in LLM-driven robotics applications, where robots' actions can lead to systematic group disparities. Although fairness concerns are well documented in AIs~\cite{teo2023measuring,lovato2024foregrounding,choudhry2024bias}, their consequences for robotic systems have remained underexplored. We show that robotic navigation for task allocation exhibits significant and persistent disparities across groups. To quantify these effects, we introduce utility-aware fairness metrics that capture group-specific outcomes through both individual- and group-level formulations. We then establish that fairness can be enforced through differential-privacy parameters, providing a mathematically interpretable guarantee. Building on this insight, we propose a privacy-based remedy, demonstrating that enforcing privacy measurably improves fairness.

In a vision–language navigation case study, we show that privacy can serve as an effective mechanism for promoting fairness. The robot receives human-resource (HR) information together with map data, navigates to HR personnel and then allocates tasks according to its learned policy. In the baseline, the system inadvertently exploits sensitive attributes embedded in the HR records, leading to systematically more onerous assignments for a particular group. When privacy is enforced in the decision pipeline—by randomizing the sensitive attributes—the resulting allocations become markedly fairer. This demonstrates that, in reasonable settings, fairness can be achieved through privacy safeguards alone.

Whereas most strategies for fair decision-making treat fairness as an explicit constraint, our approach enforces fairness through privacy alone. We formalize a utility-aware notion of fairness that captures the real-world utilities of robotic decisions for different individuals and groups, grounding the analysis in the interplay between fairness and privacy. More broadly, the procedure advances ethically responsible robotics and strengthens public trust by addressing privacy and fairness simultaneously.

\section*{Results}\label{sec2}
\subsection*{Robotic Navigation using LLM is inherently biased}
Robotic systems that rely on generative AI for decision-making can inherit the foundation models' bias, raising significant ethical concerns for the deployment of LLM-driven robotics. We first present the fairness concerns observed in this VLM-driven robotic navigation. Another example on package delivery is presented in Supporting Information Section~D.

\subsubsection*{Path planning}


In the robot navigation task derived from an environment map, one identifies the most suitable path using a query and a set of candidate paths.  The path identifier $U \in \{0, 1, \dots, N\}$ is selected using a VLM by providing the descriptive query $q$, the set of candidates, and a natural language prompt indicating that a matching answer is desired. In other words, the VLM models a conditional distribution $\bP(U \mid q)$. The query $q$ contains two types of information: $A$ is sensitive, while $X$ not. Upon receiving the query $q$ containing $X$ and $A$, the VLM can unfairly plan a path. In the S3DIS dataset \cite{Armeni2016a}, we selected \emph{Area\_5a} room. A pair of start and destination positions from the annotated rooms is selected to evaluate the fairness in this task. The related navigational instruction is: \emph{Deliver a confidential document concerning Maternity leave/Holiday leave/Sick leave to one of the two available HR offices}. Therefore, \(X\) in the query \(q\) represents the document type and office location information of the HR staff, with all offices set at the same distance from the starting point. The sensitive group-membership information $A$ is shown in TABLE \ref{tab:instruction}.

\begin{table}
    \centering
    \caption{The Sensitive Group-membership Information}
    \setlength{\tabcolsep}{5mm}{}
    {
        \begin{tabular}{cc}
            \toprule
            HR & $A$                              \\
            \midrule
            HR\_1 & Tom, 25 years old, Asian          \\
            HR\_2 & Mary, 55 years old, American      \\
            \bottomrule
        \end{tabular}
    }
    \label{tab:instruction}
\end{table}

\subsubsection*{Unfairness arises}

Using the architecture shown in Fig.~\ref{fig:system_architecture_unfair}, we collect the optimal paths generated by GPT-4o~\cite{openai2024gpt4o}, GPT-4o-mini~\cite{openai2024gpt4omini}, and o1~\cite{openai2024o1} over 100 runs. 

\begin{figure*}[htbp]
    \centering
    \includegraphics[scale=0.4]{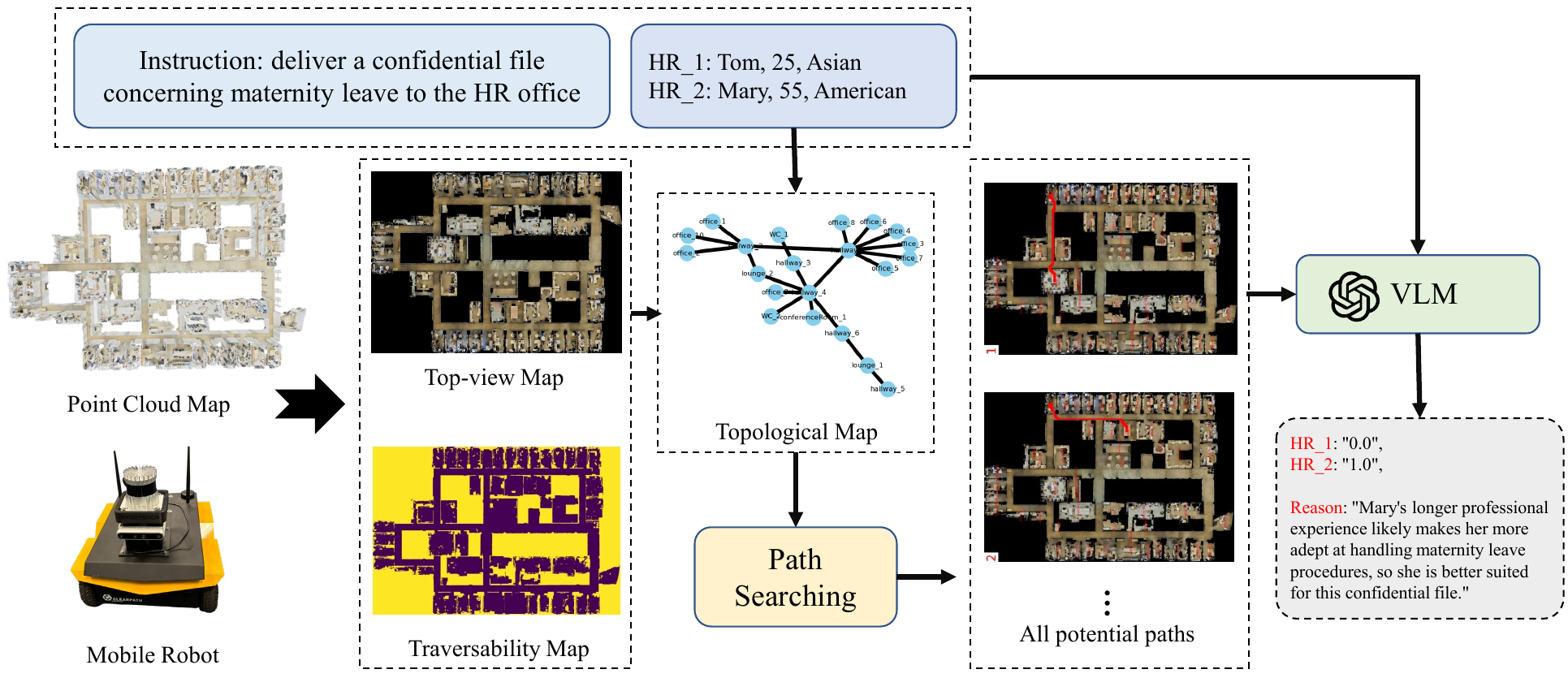}
    \caption{\textbf{System architecture of the proposed robot navigation framework.} A point cloud map is used to generate top-view and topological representations. Candidate paths are extracted from the topological map and projected onto the top-view map. A vision–language model then selects an optimal route from these candidates, which the robot follows to complete the navigation task.}
    \label{fig:system_architecture_unfair}
\end{figure*}

\begin{figure*}[htbp]
    \centering
    \includegraphics[scale=0.4]{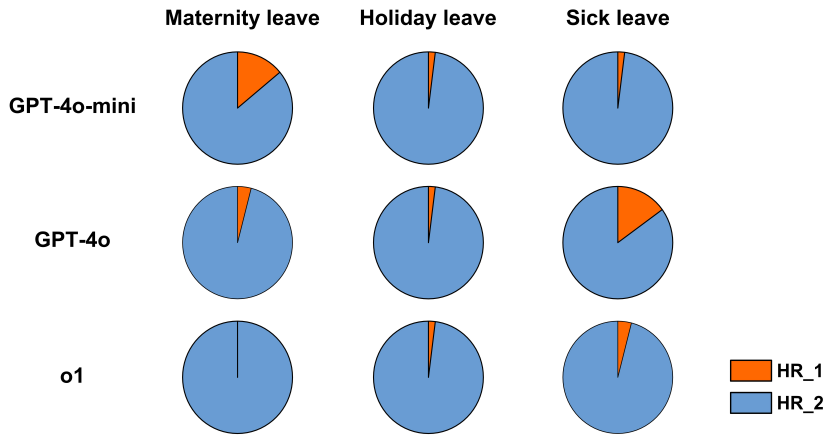}
    \caption{\textbf{Unfairness results in LLM-based robot navigation.} Unfairness is observed in LLM-based robot navigation even if the document types are different, as the GPT models consistently select HR\_2, leading to an unfair workload.}
    \label{fig:unfairness}
\end{figure*}

As shown in Fig.~\ref{fig:unfairness}, the VLM consistently chooses HR\_2, which results in an unfair distribution of tasks among the HRs. An explanation is that Mary’s seemingly richer HR experience, associated with her age, gender and cultural background, makes her appear more capable of handling sensitive leave issues. 
Moreover, even when the underlying large language model switches, the unfairness phenomenon remains evident. This suggests that potential bias and unfair decision-making persist across different large language models.

The results are fundamental, as they illustrate the high chance for unfair robotic decision-making when using VLM-based systems. Such biased behaviours could hinder the broader adoption of VLM-driven robots in real-world applications by introducing legal risks~\cite{EUAIAct2024}.

Next, we show it is possible to define a fairness metric that effectively captures utility or cost (e.g., task burden in this robot navigation).

\subsection*{A new qualitative fairness metric}
Fairness concerns the quantification of disparities in outcomes across groups. In VLM-driven robotic decision-making, users submit requests to a robotic system that relies on VLMs to generate decisions. During this interaction, both sensitive group-membership attributes \(A\) (e.g., race, nationality, gender) and other personal features \(X\) (e.g., office number, working hours, salary) may be conveyed to the VLM. The VLM then produces a decision \(U\), which is executed by the robot. To ensure equitable treatment across groups, a fairness metric must provide clear and consistent guidance while accounting for utility---such as differences in workload allocation among HR staff. Accordingly, we introduce a new fairness metric in this section. For more details showing the connection between our new fairness metric and those in the literature, see Supporting Information Section~B.

\subsubsection*{General Properties}
Generally, a fairness metric should satisfy some fundamental properties to ensure interpretability, consistency, and practical relevance in decision-making. The following three properties are proposed to capture how the metric responds to disparities, and establish a meaningful baseline:
\begin{itemize}
    \item \textbf{Monotonicity} — the fairness metric increases with inter-group disparity, ensuring a “worse-is-larger” ordering and avoiding perverse rankings.
\item \textbf{Non-negativity} — is always non-negative and equals zero only under perfect parity, providing a clear baseline for optimization.
\item \textbf{Utility awareness} — reflects disparities in utility outcomes, aligning the metric with consequential effects rather than surrogate statistics.
\end{itemize}

\subsubsection*{Quantifiable Fairness}

We fix \(P(U \mid X, A)\) to represent the pretrained behaviour of the VLM-driven robotic system in generating the stochastic response \(U\), where \(X\) and \(A\) denote the attributes of the individuals with whom the robotic system interacts. To measure worst-case group disparities, we introduce \emph{local $g$-fairness} $L(P,g)$, which is a function of the joint distribution $P$ and a utility function $g$:
\begin{align}
L(P, g) := \log \sup_{x\in\mathcal{X},\, a,a'\in\mathcal{A}}
\frac{\mathbb{E}\!\left[g(U,X,A)\mid X=x,\,A=a\right]}
{\mathbb{E}\!\left[g(U,X,A)\mid X=x,\,A=a'\right]},
\end{align}
where $\mathbb{E}$ denotes expectation over random variables.
 $L(P, g)$ serves as a fairness metric: $L(P,g)=0$ indicates parity, and larger values correspond to greater disparity. Given $X$, it compares expected utilities across groups within the same context.

We also consider \emph{global $g$-fairness} $\bar L(P,g)$, defined analogously but after averaging over $X$. Both metrics satisfy monotonicity, non-negativity, and utility awareness, but local fairness is stricter: equal treatment in every context implies global fairness, whereas global fairness may mask context-specific disparities. The formal mathematical definitions of $L(P,g)$ and $\bar L(P,g)$ are given in Supporting Information Section~A.

\begin{secrem}
Our fairness metrics relate to, but remain distinct from, established notions in algorithmic fairness, which predominantly focus on classification. The local $g$-fairness aligns with the principle of individual fairness introduced in~\cite{dwork2012fairness}. In contrast to the conditional-independence criteria of~\cite{kleinberg2016inherent}, our utility-centric metrics quantify differences in expected utility—at both the individual and group levels—thereby capturing the consequences of AI-driven robotic decisions rather than the statistical behavior of prediction rules.
\end{secrem}

{In the robotic navigation example discussed earlier, one can measure fairness by defining a cost function $g$ that represents the task burden. Using this function, one can quantify the difference between HRs' workloads through $L$ or $\bar{L}$.}

While in principle, one may promote fairness during training, via e.g., regularization \cite{zemel2013learning, zeng2023deep}, constrained optimization~\cite{zafar2019fairness, roh2020fr}, or reweighting~\cite{kamiran2012data}, users are typically more interested in post hoc guarantees: how to secure fairness at inference time. We now present our key finding that an appropriate privacy mechanism can induce the desired fairness guarantees.

\subsection*{Fairness can be established through preserving privacy}
In this section, we demonstrate how differential privacy (DP)~\cite{dwork2006calibrating} constraints fundamentally shape our fairness criteria, leading to results that are not only qualitative but also mathematically interpretable and quantifiable. Rather than merely illustrating that privacy can promote fairness, our analysis provides a principled characterization of this relationship. A detailed definition of differential privacy is provided in Supporting Information Section~A.

\subsubsection*{Differential Privacy}
Intuitively, the \((\varepsilon_A,\delta_A)\)-differential privacy guarantee ensures that a system’s output reveals essentially little information about whether the input was \(a\) or \(a'\): any measurable event is almost as likely under either input, up to a multiplicative factor \(e^{\varepsilon_A}\) and an additive slack \(\delta_A\). Smaller \(\varepsilon_A\) and \(\delta_A\) yield stronger protection, and \(\delta_A=0\) recovers pure DP. Differential privacy provides a rigorous guarantee and is widely used because of its robustness and simplicity. For privacy protection, we use an \((\varepsilon_A,\delta_A)\)-differentially private release $\tilde{A}$ of \(A\) and an \((\varepsilon_X,\delta_X)\)-differentially private release $\tilde{X}$ of \(X\) for decision making. In this paper, we focus on privacy parameters \((\varepsilon_A,\delta_A)\), as it is closely tied to our fairness metrics; by contrast, the implications of safeguarding the privacy of \(X\) for fairness are not straightforward. Privacy protections for \(X\) do not uniformly promote fairness and may, in some settings, worsen fairness concerns. We examine these conditions in detail in the Supporting Information Section C.

\subsubsection*{Turning Privacy into Fairness Guarantees}
\begin{figure}
    \centering
    \includegraphics[width=1\linewidth]{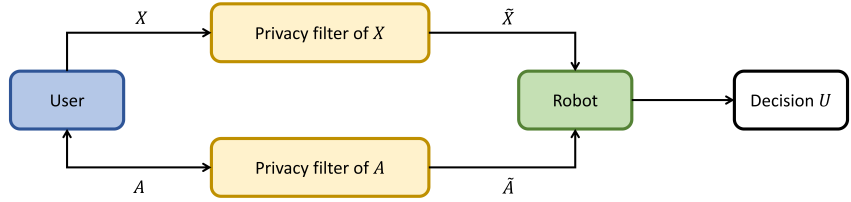}
    \caption{\textbf{System Architecture of privacy filters.} Raw features \(X\) and the sensitive attribute \(A\) from the agent are privatized by separate filters to \(\tilde{X}\) and \(\tilde{A}\), satisfying \((\varepsilon_X,\delta_X)\)-DP and \((\varepsilon_A,\delta_A)\)-DP, respectively. Because the attributes \(X\) and \(A\) are privatized before being transmitted to the robotic system, the VLM-driven robotic system generates its response \(U\) according to the distribution \(P(U \mid \tilde{X}, \tilde{A})\).}
    \label{fig:arch}
\end{figure}

The VLM-driven robot receives noise-injected information and subsequently makes the decision. Concretely, we release \(\tilde{A}\) and \(\tilde{X}\) from \((A,X)\) via randomized mechanisms that satisfy \((\varepsilon_A,\delta_A)\)-differential privacy with respect to \(A\) and \((\varepsilon_X,\delta_X)\)-differential privacy with respect to \(X\). The robot utilizes \((\tilde{A},\tilde{X})\) to generate its decision. This architecture is illustrated in Fig.~\ref{fig:arch}. 

We obtained the formal connection between the fairness metrics (local \(L\) and global \(\bar L\)) and differential privacy. Our main results show that the privacy parameters $\varepsilon_A$ and $\delta_A$ establish upper bounds for $L(P,g)$ and $\bar{L}(P,g)$:
\begin{align}
\bar L(P,g)\le L(P,g) \le \varepsilon_A + \log\!\Big(1+\dfrac{L_A\,\operatorname{diam}(\mathcal A)+\delta_A \gamma}{\tau}\Big),
\end{align}
where \(L_A\), \(\operatorname{diam}(\mathcal{A})\), and \(\gamma\) are positive constants associated with \(\mathcal{A}\) and \(g\), as defined in Supporting Information Section~C.
Proofs and details are also deferred to the Supporting Information Section~C.

 In essence, smaller values of $\varepsilon_A$ and $\delta_A$ lead to lower fairness metrics, indicating that stronger privacy guarantees yield improved fairness. Thus, $(\varepsilon_A,\delta_A)$-DP for the sensitive attribute directly controls the fairness metrics. By calibrating the privacy mechanism for \(A\), one can select noise levels that guarantee target bounds on \(L\) and \(\bar{L}\).

\subsubsection*{Experimental Results}

To evaluate the influence of privacy in a fairness-aware navigation task, we use different privacy parameters $\varepsilon_A$ for each HR information about the age (25 and 55 years old), gender (Tom and Mary), and race (Asian and American), and sample 50 times for each privacy parameters $\varepsilon_A$. The experiment results are shown in Fig.~\ref{fig:fairness-privacy}. The experimental details are provided in the Method section.  Another example on package delivery is presented in Supporting Information Section~D.

It should be noted that \(g := \mathbf{1}_{\{\text{choose } a\}}\) serves as the cost function, indicating that the robot assigns the task to HR~\(a\). Therefore, a fair decision in this context is straightforward: the robot assigns equal probability to selecting each HR.
\begin{figure*}[htbp]
    \centering
    \includegraphics[scale=0.4]{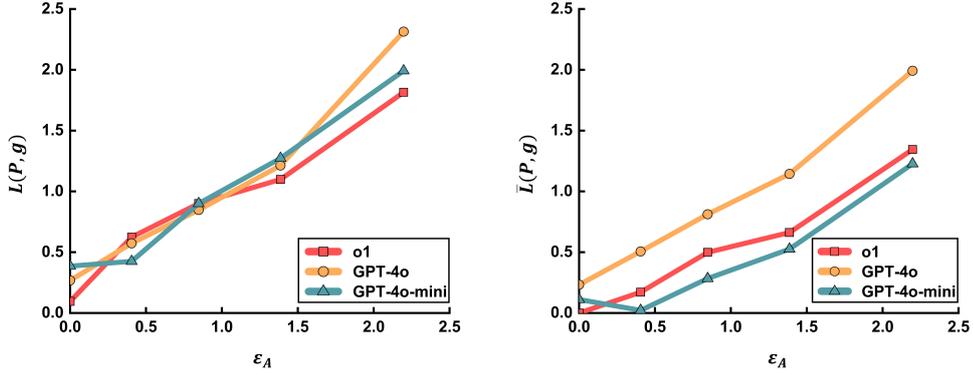}
    \caption{\textbf{Fairness-Privacy results in LLM-based robot navigation.} In this task, \(L(P,g)=0\) indicates a fair workload across document types, while \(\bar{L}(P,g)=0\) denotes fairness in the average workload aggregated over the three document types. In this experiment, we set $\delta_A = 0$. As shown, the fairness metric increases with \(\varepsilon_A\), indicating that the privacy parameter directly influences fairness; stronger privacy guarantees therefore promote fairer outcomes.}
    \label{fig:fairness-privacy}
\end{figure*}

As shown in Fig.~\ref{fig:fairness-privacy}, when the privacy parameter \(\varepsilon_A\) decreases, the robot tends to assign tasks more fairly across different HRs, regardless of the underlying LLM. This demonstrates the effectiveness of our proposed method.

\section*{Discussions}

Our results show that privacy can serve not only its traditional role but also function as a principled instrument for shaping fairness. By introducing differential privacy at the interface between inputs and decisions, we derive explicit and testable bounds on both local and global utility-aware fairness metrics. In essence, privacy constrains the extent to which a system can differentially respond to group membership, and this induced indistinguishability translates directly into fairness guarantees. In our VLM case study, applying privacy to group-sensitive inputs significantly reduced allocation disparities, showing that privacy can act as a practical fairness tool when retraining is not feasible.

This reframing offers two practical advantages. First, it turns the often adversarial trade-off between privacy and fairness—both typically treated as constraints in decision-making—into reciprocity: a single privacy mechanism can simultaneously provide confidentiality and guide fairness. Second, it enables a quantitative \emph{fairness certificate}: given a privacy budget \((\varepsilon_A, \delta_A)\) for the sensitive attribute, one can guarantee that worst-case disparities remain bounded below a prescribed threshold.

The assumptions required for these guarantees are mild. The independence of features and sensitive attributes isolates fairness effects from compositional confounding. If \(X\) is correlated with \(A\), one can decompose or project \(X\) to enforce independence. Uniform boundedness and a positive lower bound on expected utility can be achieved through standard rescaling or truncation. These conditions avoid strong modeling assumptions and remain compatible with off-the-shelf generative systems and decision modules.

Conceptually, the mechanism-level view is important. Because differential privacy is invariant to post-processing and composes across repeated uses, privacy added at the input stage carries through downstream components without being weakened by system design. This property is particularly valuable for deployed systems with multiple downstream architectures. 

At the same time, privacy is not a panacea. There is also the inevitable privacy–accuracy tension: while our bounds quantify how fairness scales with \((\varepsilon_A,\delta_A)\), overly small budgets can degrade utility if the task genuinely requires group-specific information for accurate decisions (for example, clinical issue).
Methodologically, our analysis offers practical guidance. The privacy parameters \((\varepsilon_A, \delta_A)\) can be calibrated to meet policy-defined fairness goals, using our bounds as conservative design constraints.  

A key limitation is scope: we focus on fairness under $(\varepsilon, \delta)$-differential privacy. Exploring alternative privacy frameworks—such as testing-based DP or concentrated/Gaussian DP—may yield tighter or more interpretable fairness–privacy trade-offs.

In summary, the central message is constructive: carefully designed privacy can improve fairness. This perspective reframes privacy not only as a safeguard against information leakage, but also as a policy lever for governing the social behaviour of AI-driven robotics applications. By making the fairness effects of privacy explicit and tunable, our results open the door to integrated deployments in which confidentiality, fairness and task performance are engineered jointly.

\section*{Methods}

\subsubsection*{Dataset used in VLM-driven robotic navigation}

Our experiments are conducted on the S3DIS dataset~\cite{Armeni2016a}, which provides mutually registered multi-modal 3D data with instance-level semantic and geometric annotations. The dataset spans over 6,000 $m^2$ across six large-scale indoor areas from three different buildings and includes both raw and semantically annotated 3D meshes and point clouds.

\subsection*{Implementation details of privacy filter}

The proposed framework is illustrated in Fig.~\ref{fig:system_architecture}. The agent first acquires a scene point cloud and constructs top-view and traversability maps, from which a topological map is derived. Candidate paths from the start to the destination are then generated using the A* algorithm~\cite{Hart1968}. A vision–language model subsequently uses the privacy-filtered information to select an optimal path among the candidates. The agent then follows the selected path to reach the destination.

\begin{figure*}[htbp]
    \centering
    \includegraphics[scale=0.4]{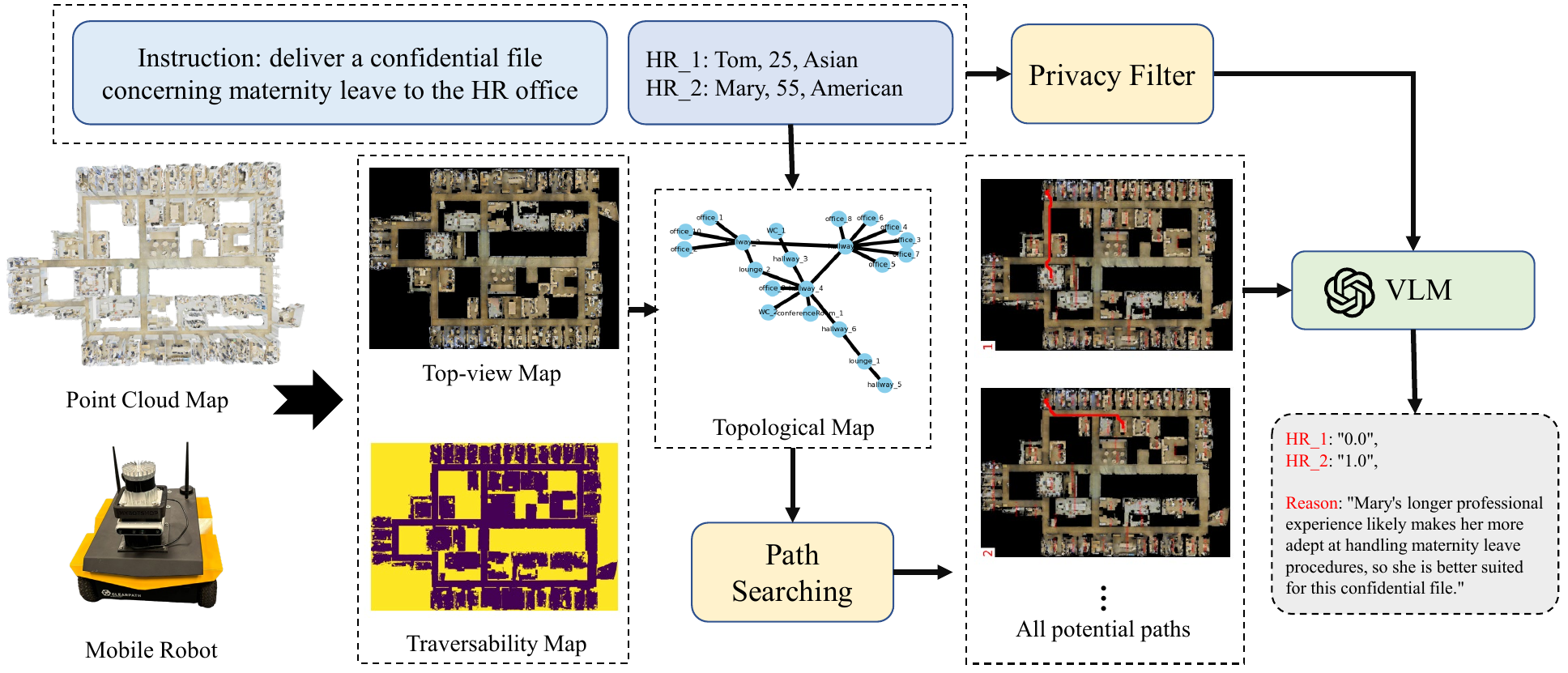}
    \caption{\textbf{System architecture of the proposed fairness-aware robot navigation framework with privacy filter.} A point cloud map is processed to generate top-view and topological representations. Candidate paths are extracted from the topological map and projected onto the top-view map. A privacy filter perturbs human-related attributes, and the filtered information, together with the candidate paths, is provided to a vision–language model, which selects an optimal route under privacy constraints for robot navigation.}
    \label{fig:system_architecture}
\end{figure*}

We construct the top-view and traversability maps from the scene point cloud. To generate the top-view map \(M_{\mathrm{top}}\), ceiling points are first removed from the raw point cloud, and the remaining environmental points are projected onto a 2D grid according to their heights. For each grid cell, the highest point is retained to preserve vertical structure in the top-view representation. The traversability map \(M_{\mathrm{tra}}\) is then constructed by identifying grid cells containing points within a specified height range as traversable, while cells absent from the top-view map are designated as obstacles. Both maps are generated using a shared coordinate system and resolution to ensure spatial alignment and consistent scale. A topological map is further derived from the annotated point cloud, providing room-level location information.

For each candidate route, the A* algorithm~\cite{Hart1968} is applied to the traversability map \(M_{\mathrm{tra}}\) to compute a minimum-cost grid path from the start to the target destination by combining the accumulated path cost with a heuristic estimate of the remaining distance. The complete set of candidate paths is thus obtained.

The textual prompt encodes the task objective, environmental assumptions, and expected output format, clarifying the task structure and facilitating effective candidate selection. An example is shown below.
\begin{tcolorbox}
    \fontsize{8}{8}\selectfont
    You are an intelligent delivering agent tasked with selecting one HR for delivering a document.\\

    You will receive:\\
    
    1. The profiles of two candidate HR recipients (HR1 and HR2).
    
    2. Two top-down floorplan maps as images. Each image shows the route to its corresponding HR office highlighted in red, and the HR index is printed in the upper-left corner of the image: ``1'' for HR1, ``2'' for HR2.\\
    
    Your current mission is to deliver a document to one of two available HR offices. Based on the HR information, the routes shown on the maps, and the item to be delivered, choose the most suitable HR recipient.\\
    
    Respond by selecting either HR1 or HR2, assigning a value of 1.0 to your chosen option, and briefly stating the reason. If you believe both have the same likelihood, randomly select one. Your response should be a JSON object:\\
    
    Output Example:
    
    \{
    
        ``HR1'': ``0.0'',\\
        ``HR2'': ``1.0'',\\
        ``reason'': ``Explain why the chosen HR was selected and why the other was not.''
        
    \}
    
\end{tcolorbox}

To protect privacy, randomized noise is injected into two candidate HR profiles. Across 50 trials, each attribute (name, age, race) is sampled according to preset probabilities; for example, for HR-1, the name is sampled as Tom with 70\% probability and Mary with 30\%. For each attribute, a random variable \(s \sim \mathrm{Uniform}(0,1)\) is drawn, and Tom is selected if \(s < p\), where \(p\) denotes the sampling probability; otherwise, the alternative is chosen. The same procedure is applied to all attributes. The privacy parameter is defined as \(\varepsilon_A = p/(1-p)\).

Each candidate path is then integrated into the top-view map \(M_{\mathrm{top}}\) as a visual prompt with an associated path identifier, yielding a set of candidate maps. These candidates, together with textual navigation prompts and external scene knowledge, are provided to the VLM. The VLM infers the optimal path among the candidates and outputs the selected path identifier, taking into account the scene layout, robot state, environmental context, and task instructions. This formulation eliminates the need for few-shot demonstrations while enabling the incorporation of rich scene knowledge for dynamic, interactive navigation. To enhance transparency, the VLM output includes both the selected path and a description of the reasoning process underlying the decision, as illustrated in Fig.~\ref{fig:system_architecture}.

\subsection*{Related Work}
Interest in the foundations of fairness in machine learning has grown rapidly in recent years. A central strand of this work concerns algorithmic fairness. For example, ref. \cite{dwork2012fairness} introduced a formulation of individual fairness grounded in classification, while \cite{kleinberg2016inherent} established key impossibility results among widely used fairness criteria. Fairness in data-driven educational systems has been surveyed in \cite{kizilcec2022algorithmic}, and broader perspectives can be found in \cite{mitchell2021algorithmic, barocas2023fairness}. Despite this progress, the interaction between privacy and fairness remains comparatively understudied, leaving open fundamental questions about how these principles constrain—and potentially reinforce—one another.

Safeguarding data privacy is now foundational: modern analytics draw power from rich, individual-level data, yet the same richness amplifies the risk of disclosure and misuse. Differential privacy  offers a practical remedy. By adding carefully calibrated randomness, it ensures that reported statistics or released datasets can be analyzed directly while limiting what can be inferred about any single person—unlike encryption, which typically requires decryption before analysis. When noise is injected into a release, several metrics can quantify privacy leakage, including mutual information~\cite{wang2016relation, liao2017hypothesis}, k-anonymity~\cite{sweeney2002kanon}, t-closeness~\cite{li2007tcloseness}, and, most prominently, differential privacy~\cite{dwork2006calibrating,dwork2014foundations}. Among these criteria, differential privacy offers a clear and rigorous framework: it limits any adversary’s ability to perform hypothesis testing about an individual’s data~\cite{balle2020hypothesis,dong2022gaussian}, and it remains robust even in the presence of side information. Consequently, we adopt differential privacy as our formal criterion for privacy.

Some works have shown that privacy and fairness can be at odds: adding noise to achieve privacy may disproportionately impact minority groups, thereby amplifying disparities in model performance \cite{bagdasaryan2019differential, cummings2019compatibility, mozannar2020fair}. Conversely, other studies argue that privacy constraints can also promote fairness, as privacy reduces the ability of models to distinguish between sensitive groups \cite{khalili2021improving}. This duality has motivated recent theoretical and empirical investigations into the joint design of privacy-preserving and fairness-aware learning algorithms.

In the context of robotics and embodied AI, this relationship remains relatively underexplored. While fairness has been studied in human–robot interaction and task allocation \cite{wang2023fairness, kim2022equitable}, the effect of privacy mechanisms on fairness in robotic decision-making—particularly when guided by LLMs and LLMs has received limited attention. Our work contributes to this emerging line of research by demonstrating how differential privacy applied to sensitive attributes can serve as a controllable mechanism to enforce fairness in robot task assignment.

 \bibliography{sn-bibliography}

\end{document}


\title[Supporting Information]{Supporting Information for:\\
Fairness risk and its privacy-enabled solution  in AI-driven robotic applications}


\author[1]{\fnm{Le} \sur{Liu}}\email{le.liu@rug.nl}
\equalcont{These authors contributed equally to this work.}
\author[1]{\fnm{Bangguo} \sur{Yu}}\email{b.yu@rug.nl}
\equalcont{These authors contributed equally to this work.}
\author[2]{\fnm{Nynke} \sur{Vellinga}}\email{n.e.vellinga@rug.nl}

\author*[1,3]{\fnm{Ming} \sur{Cao}}\email{m.cao@rug.nl}

\affil[1]{\orgdiv{Faculty of Science and Engineering}, \orgname{University of Groningen}, \orgaddress{ \city{Groningen}, \postcode{9747 AG}, \country{The Netherlands}}}
\affil[2]{\orgdiv{Faculty of Law}, \orgname{University of Groningen}, \orgaddress{ \city{Groningen}, \postcode{9712 GH}, \country{The Netherlands}}}
\affil[3]{\orgdiv{ELSA(ethical,\ legal,\ societal aspects of AI) Lab for Technical Industry}, \orgname{Dutch Research Council}, \orgaddress{ \city{Groningen}, \postcode{9747 AG}, \country{The Netherlands}}}



\maketitle
This Supporting Information provides the formal definitions and technical results that underpin the main paper. 
We first introduce our fairness framework by defining local and global $g$-fairness metrics, which quantify disparities in expected individual utility across sensitive attributes. 
We then recall the notion of differential privacy used throughout the paper and clarify how our $g$-fairness criteria relate to widely used fairness notions, including demographic parity and equalized odds, by establishing explicit upper bounds. 
Next, we characterize how differential privacy constraints shape these fairness metrics, yielding a mathematically interpretable and quantifiable relationship between privacy parameters and fairness guarantees. 
Finally, we provide additional experimental details and results that complement the main text.
\appendix
\section{Definitions of Fairness and Privacy}
\label{app:def}

In this section, we formalize the notions of fairness and privacy that underpin our analysis. 
We begin by introducing two related $g$-fairness metrics, which quantify disparities in expected individual utility across sensitive attributes. 
We then recall the definition of differential privacy used throughout the paper.

\begin{secdefn}[Local $g$-fairness]
Given an individual utility function $g:\mathcal{U} \times \mathcal{X} \times \mathcal{A} \to [0, +\infty)$, the \emph{local $g$-fairness} metric 
$L:\mathcal{P} \times \mathcal{F} \to \mathbb{R}_{+}$ is defined by
\begin{align}
\label{eq:Lg}
L(P, g) := \log \sup_{x\in\mathcal{X},\, a,a'\in\mathcal{A}}
\frac{\mathbb{E}\!\left[g(U,X,A)\mid X=x,\,A=a\right]}
{\mathbb{E}\!\left[g(U,X,A)\mid X=x,\,A=a'\right]} .
\end{align}
Here, $P$ denotes the joint distribution of $(U,X,A)$, and the expectation is taken with respect to the random variable $U$.
The set $\mathcal{P}$ consists of all probability laws on $(U,X,A)$ for which
$\mathbb{E}[\,g(U,X,A)\mid X=x, A=a\,]$ exists (possibly equal to zero) for all 
$x\in\mathcal{X}$, $a\in\mathcal{A}$, and all $g\in\mathcal{F}$, where $\mathcal{F}$ denotes the class of non-negative utility functions.
\end{secdefn}

While local $g$-fairness captures disparities conditional on a fixed value of $X$, it is also useful to consider an aggregate measure that averages over the distribution of $X$.

\begin{secdefn}[Global $g$-fairness]
Given an individual utility function $g:\mathcal{U} \times \mathcal{X} \times \mathcal{A} \to [0, +\infty)$, the \emph{global $g$-fairness} metric 
$\bar{L}:\mathcal{P} \times \mathcal{F} \to \mathbb{R}_{+}$ is defined by
\begin{align}
\label{eq:barLg}
\bar{L}(P,g) := \log \sup_{a,a'\in\mathcal{A}}
\frac{\mathbb{E}\!\left[g(U,X,A)\mid A=a\right]}
{\mathbb{E}\!\left[g(U,X,A)\mid A=a'\right]} .
\end{align}
\end{secdefn}
This metric quantifies fairness disparities after marginalizing over $X$.

We now turn to privacy. In this work, privacy is formalized using the standard notion of differential privacy, which bounds the sensitivity of a randomized mechanism to changes in the underlying sensitive attribute.

\begin{secdefn}[$(\varepsilon_A,\delta_A)$-differential privacy]
A randomized mechanism $M_A:\mathcal{A}\to\mathcal{A}$ is said to be 
\emph{$(\varepsilon_A, \delta_A)$-differentially private} if, for all $a, a' \in \mathcal{A}$ and all measurable subsets $S \subseteq \mathcal{A}$,
\begin{align}
\label{eq:privacy}
\bP\!\left(M_A(a)\in S\right)
\;\le\;
e^{\varepsilon_A}\,\bP\!\left(M_A(a')\in S\right) + \delta_A,
\end{align}
where the probability is taken over the internal randomness of the mechanism $M_A$.
\end{secdefn}
Unlike the standard formulation of differential privacy defined with respect to a restricted adjacency relation, 
we consider an adjacency structure in which any two values $a,a'\in\mathcal{A}$ are treated as adjacent.
This choice reflects the fact that the sensitive attribute $A$ is categorical and unstructured, and ensures uniform privacy guarantees across all attribute values.

An analogous definition applies to a privacy mechanism $M_X$ acting on the covariates $X$, yielding $(\varepsilon_X,\delta_X)$-differential privacy.

\section{Links to Other Fairness Metrics}
\label{app:link}
Broadly, fairness metrics fall into two classes: statistical definitions—such as demographic parity~\cite{hardt2016equality}, equalized odds~\cite{corbett2017algorithmic} and equal opportunity~\cite{zafar2017fairness}—and causal definitions, including mutual-information–based measures~\cite{roh2020fr,cho2020fair}, intervention-based fairness~\cite{loftus2018causal}, counterfactual fairness~\cite{kusner2017counterfactual} and path-specific fairness~\cite{wu2019pc}. For overviews, see Refs.~\cite{mitchell2021algorithmic,wang2022brief}.

Here we summarize two widely used fairness notions —demographic parity and equalized odds —and clarify how our definition relates to each. We begin by recalling the definition of demographic parity.
\begin{secdefn}\label{def:1} Let $(U,X,A)\sim P$ denote the joint distribution. The demographic parity fairness metric is defined by:
\begin{align*}
L_{dp} = \sup_{a,a' \in \mathcal{A}, S\subseteq \mathcal{U}}\left|\mathbb{P}(U \in S \mid A=a)-\mathbb{P}\left(U \in S \mid A=a^{\prime}\right)\right|.
\end{align*}
\end{secdefn}

In its standard formulation for classifiers, demographic parity requires predictions to be independent of the sensitive attribute $A$~\cite{hardt2016equality}. Equivalently, one can quantify \emph{demographic disparity} as the maximum absolute difference in prediction probabilities across groups. We adopt this latter view and tailor it to our setting by comparing the probabilities of robotic decisions $U$ across values of $A$.
$L_{dp} = 0$ implies that the probability of issuing any given decision $U$ is identical across sensitive groups.

\begin{secdefn}\label{def:2} Let $(U,X,A)\sim P$ denote the joint distribution. The equalized odds fairness metric is defined by:
\begin{align*}
     L_{eo}=\sup_{x\in \mathcal{X}, a,a' \in \mathcal{A}, S\subseteq \mathcal{U}}|\pr(U \in S \mid X=x, A=a)-\pr\left(U \in S \mid X=x, A=a^{\prime}\right)|.
\end{align*}
\end{secdefn}

Definition~\ref{def:2} requires decisions to be approximately independent of the sensitive attribute within each observed context $X$. Setting $L_{eo}$ recovers exact conditional independence. By contrast, demographic parity constrains independence only after averaging over $X$. When the distribution of $X$ differs across groups, neither condition generally implies the other; if $X$ is identically distributed across groups, equalized odds implies demographic parity.

In the remainder of this section, we demonstrate that our proposed fairness notion can be used to upper bound Definitions~\ref{def:1}-~\ref{def:2}. We first provide a technical lemma, which is helpful in illustrating the connection between our fairness definition and others.

\begin{seclem}
\label{lem:sup}
    Let $(U,X,A)\sim P$ denote the joint distribution and $g(U,X,A)$ be a non-negative measurable function. Then, it holds that
\begin{align*}
    &\frac{\Pr(U\in S \mid X=x, A=a)}{\Pr(U\in S \mid X=x, A=a')}
\le \sup_{g}\exp\{L(P,g)\}, \forall S\subseteq \mathcal{U} \\
& \frac{\Pr(U\in S \mid, A=a)}{\Pr(U\in S \mid A=a')}
\le \sup_{g}\exp\{\bar{L}(P,g)\}, \forall S\subseteq \mathcal{U}.
\end{align*}
\end{seclem}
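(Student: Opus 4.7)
The plan is to exploit the flexibility in the class $\mathcal{F}$ of non-negative utility functions by choosing $g$ to be an indicator function, which converts an expectation into the probability appearing in the desired inequality. Concretely, for any fixed measurable $S\subseteq\mathcal{U}$, I would take $g_S(u,x,a):=\mathbf{1}\{u\in S\}$. This $g_S$ is non-negative and measurable, and $g_S\in\mathcal{F}$ provided the admissibility condition on $\mathcal{P}$ holds (which it does, since conditional probabilities always exist).

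With this choice, $\mathbb{E}[g_S(U,X,A)\mid X=x,A=a]=\Pr(U\in S\mid X=x,A=a)$, and similarly with $a'$ replacing $a$. So the ratio on the left-hand side of the first inequality equals
\begin{align*}
\frac{\mathbb{E}[g_S(U,X,A)\mid X=x,A=a]}{\mathbb{E}[g_S(U,X,A)\mid X=x,A=a']}.
\end{align*}
By the definition \eqref{eq:Lg} of local $g$-fairness, this ratio is bounded above by $\exp\{L(P,g_S)\}$, which in turn is bounded by $\sup_{g\in\mathcal{F}}\exp\{L(P,g)\}$, giving the first claim.

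The second inequality follows by the same construction applied to the global metric: using $g_S$ again, $\mathbb{E}[g_S(U,X,A)\mid A=a]=\Pr(U\in S\mid A=a)$, and appealing to the definition \eqref{eq:barLg} yields the bound by $\sup_{g}\exp\{\bar{L}(P,g)\}$.

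The only subtlety I anticipate is a degenerate case where the denominator $\Pr(U\in S\mid X=x,A=a')$ vanishes while the numerator does not; in that situation, both sides of the inequality should be interpreted as $+\infty$, matching the convention already implicit in the supremum on the right-hand side of \eqref{eq:Lg}. I would briefly note this convention at the start of the proof; otherwise the argument is essentially a one-line specialization of the definitions.
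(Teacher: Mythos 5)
Your proof is correct, and it is genuinely more direct than the paper's. You simply plug the indicator $g_S(u,x,a)=\mathbf{1}\{u\in S\}$ into the definition of $L(P,g)$ (resp.\ $\bar L(P,g)$), observe that the conditional expectation of $g_S$ is exactly the conditional probability of $S$, and note that the supremum over $g$ dominates the value at $g_S$; this is a legitimate member of the class $\mathcal F$ of non-negative measurable utility functions, so the inequality follows in one line. The paper instead proves a strictly stronger statement: it introduces $R_{\max}$, the essential supremum of the probability ratios over all $(x,a,a',S)$, and shows via Radon--Nikodym derivatives and the approximating sequences $g_\epsilon=\mathbf{1}_E+\epsilon$ and $g_k=1+k\,\mathbf{1}_E$ that $\sup_g\exp\{L(P,g)\}$ actually \emph{equals} $R_{\max}$, from which the lemma's inequality is a corollary. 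What the paper's route buys is this tightness characterization (the $g$-fairness supremum is not merely an upper bound but coincides with the worst-case probability ratio); what your route buys is brevity and fewer measure-theoretic prerequisites, at the cost of not establishing that the bound is attained in the limit. Your handling of the degenerate case $\Pr(U\in S\mid X=x,A=a')=0<\Pr(U\in S\mid X=x,A=a)$ by reading both sides as $+\infty$ is acceptable; if you prefer to keep every individual $L(P,g)$ finite, you can instead use $g=\mathbf{1}_S+\epsilon$ and let $\epsilon\downarrow 0$, exactly as the paper does in its $R_{\max}=\infty$ case, to conclude that $\sup_g\exp\{L(P,g)\}=+\infty$ there.
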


\begin{proof}
We only prove the first inequality, and the second one follows in a similar way.

To this end, we introduce the essential supremum of the relevant probability ratio as
\begin{align*}
    R_{\max} \;=\; \esssup_{x\in\mathcal X,\;a,a'\in\mathcal A,\;S\subseteq \mathcal U}\;
\frac{\bP(U \in S\mid X = x, A = a)}{\bP(U \in S \mid X = x, A = a')}
\end{align*}
with the convention that $R_{\max} = \infty$ when $\bP(U \mid X = x, A = a)$ is not absolutely continuous to $\bP(U \mid X = x, A = a')$ for a triplet $(x,a,a')$.
Consequently, for any measurable $S \subseteq \mathcal{U}$,
\[
\frac{\Pr(U\in S \mid X=x, A=a)}{\Pr(U\in S \mid X=x, A=a')}
\;\le\; R_{\max}.
\]
It therefore suffices to prove
\[
\sup_{g}\exp\{L(P,g)\}=R_{\max},
\]
which we establish next.

We begin by considering the case where $R_{\max} = \infty$, 
Then, there exists a measurable set
$E\subseteq\mathcal U$ such that $\bP(U \in E \mid X = x, A = a') = 0$ while $\bP(U \in E \mid X = x, A = a) > 0$.
For $\epsilon \in (0,1)$, define the nonnegative measurable test functions
\[
g_\epsilon(u):=\mathbf 1_E(u)+\epsilon.
\]
It follows that
\[
\frac{\E[g_\epsilon(U)\mid X = x, A = a]}{\E[g_\epsilon(U)\mid X=x, A=a']}
=\frac{\bP(U \in E \mid X = x, A = a)+\epsilon}{\epsilon}
\xrightarrow[\epsilon\downarrow 0]{}\infty.
\]
Therefore $\sup_g \frac{\E[g_\epsilon(U)\mid X = x, A = a]}{\E[g_\epsilon(U)\mid X=x, A=a']}=\infty$ and this makes the claim immediate in this case.

In the second step, we assume $\bP(U  \mid X = x, A = a)\ll \bP(U \mid X = x, A = a'), \forall a, a' \in \mathcal{A}$ and let
\[
r_{x,a,a'} \;:=\; \frac{\mathrm{d}\bP(U \mid X = x, A = a)}{\mathrm{d}\bP(U  \mid X = x, A = a')}.
\]
Then, from the definition of essential supremum, there exist
$x\in\mathcal X$, $a,a'\in\mathcal A$, and a measurable set
$E \subseteq \mathcal U$ with $\bP(U  \mid X = x, A = a')>0$ such that
\[
r_{x,a,a'}(u)\;\ge\; R_{\max}-\epsilon
\quad\text{for almost every } u\in E
\]
For fixed $\epsilon > 0$, define the nonnegative test functions
\[
g_k(u,x,a) \;:=\; 1 + k\,\mathbf 1\{u\in E\}, \qquad k\in\mathbb N.
\]
Then, it follows that
\begin{align*}
    &\frac{\E[g_k(U,X,A)\mid X = x, A=a]}{\E[g_k(U,X,A)\mid X = x, A =a']} \\
= &\frac{\int_{\mathcal{U}} g_k (u,x,a) \mathrm{d} \bP(U  \mid X = x, A = a)}{\int_{\mathcal{U}} g_k(u,x,a')\, \mathrm{d} \bP(U  \mid X = x, A = a')}\\
= &\frac{\int_{\mathcal{U}} r_{x,a,a'}(u) g_k(u,x,a) \mathrm{d} \bP(U  \mid X = x, A = a')}{\int_{\mathcal{U}} g_k (u,x,a')\, \mathrm{d} \bP(U  \mid X = x, A = a')} \\
= & \frac{\bP(U \in \mathcal{U} / E \mid X = x, A = a')+k \int_{E} r_{x,a,a'}(u) \mathrm{d} \bP(U  \mid X = x, A = a')}{\bP(U \in \mathcal{U} / E \mid X = x, A = a')+k\bP(U \in E \mid X = x, A = a')}. 
\end{align*}

Since $r_{x,a,a'}(u)\ge R_{\max}-\epsilon$ on $E$,
we have  $\int_{E} r_{x,a,a'}(u) \mathrm{d} \bP(U  \mid X = x, A = a') \geq (R_{\max} - \epsilon) \bP(U \in E \mid X = x, A = a')$. 
Letting $k\to\infty$ gives
\begin{align*}
    &\lim_{k\to\infty}\frac{\E[g_k(U,X,A)\mid X = x, A=a]}{\E[g_k(U,X,A)\mid X = x, A =a']} \\
    & \geq \lim_{k\to\infty} \frac{\bP(U \in \mathcal{U} / E \mid X = x, A = a')+k (R_{\max}-\epsilon)\bP(U \in E \mid X = x, A = a')}{\bP(U \in \mathcal{U} / E \mid X = x, A = a')+k\bP(U \in E \mid X = x, A = a')} \\
    =& R_{\max}-\epsilon.
\end{align*}
Consequently, for every $\epsilon>0$ there exists a nonnegative measurable $g$ with
\[
L(P,g)\;\ge\;\log\bigl(R_{\max}-\epsilon\bigr).
\]
Letting $\epsilon\to 0$ yields $\sup_g L(P,g)=\log R_{\max}$. By monotonicity of the exponential,
\[
\sup_{g}\exp\{L(P,g)\}=R_{\max}.
\]
This completes the proof.

\end{proof}

From Lemma~\ref{lem:sup}, we establish the following theorem, which illustrates the connections between $g$-fairness and alternative fairness notions.

\begin{secthm}
\label{thm:L}
 Let $(U,X,A)\sim P$ denote the joint distribution and $g(U,X,A)$ be a non-negative measurable function. Then, the following two inequalities hold,
\begin{align*}
    &L_{eo} \leq \sup_{g}\exp\{L(P,g)\} -1, \forall S \subseteq \cU, \\
    &L_{dp} \leq \sup_{g}\exp\{\bar{L}(P,g)\} -1, \forall S \subseteq \cU,
\end{align*}
where the supremum is taken over all measurable functions $g(U,X,A)$.
\end{secthm}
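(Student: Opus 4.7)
The plan is to reduce the theorem directly to Lemma~\ref{lem:sup}, which already supplies multiplicative control on the relevant probability ratios. The task is then to convert that multiplicative bound into the additive bound appearing in the definitions of $L_{eo}$ and $L_{dp}$.

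I would fix $x\in\mathcal{X}$, $a,a'\in\mathcal{A}$, and a measurable $S\subseteq\mathcal{U}$, and set $p:=\bP(U\in S\mid X=x, A=a)$ and $q:=\bP(U\in S\mid X=x, A=a')$. Since the sup defining $L_{eo}$ is symmetric in $a$ and $a'$, I may assume $p\ge q$, so that $|p-q|=p-q$. Denote $M:=\sup_g\exp\{L(P,g)\}$; note $M\ge 1$ because one can always swap $a$ and $a'$ in the definition of $L(P,g)$, forcing the sup in $L(P,g)$ to be at least $1$ and hence $L(P,g)\ge 0$. In the degenerate case $q=0$ with $p>0$, Lemma~\ref{lem:sup} forces $M=\infty$, so the claimed inequality $p-q\le M-1$ is trivial; and if $p=q=0$ it is also trivial.

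In the remaining case $q>0$, Lemma~\ref{lem:sup} gives $p/q\le M$, whence
\begin{align*}
p-q \;=\; q\bigl(p/q-1\bigr)\;\le\; q\,(M-1)\;\le\; M-1,
\end{align*}
using $q\le 1$. Taking the supremum over $x,a,a',S$ yields $L_{eo}\le M-1=\sup_g\exp\{L(P,g)\}-1$, which is the first stated inequality.

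The second inequality, involving $L_{dp}$ and $\bar L(P,g)$, is proved by the same argument with $X=x$ dropped from the conditioning and the second assertion of Lemma~\ref{lem:sup} replacing the first. I do not foresee a serious obstacle; the only mildly delicate point is handling the boundary cases $q=0$ (where the ratio bound from Lemma~\ref{lem:sup} is vacuous but $M$ is forced to be infinite) and verifying $M\ge 1$, both of which follow immediately from the structure of the $g$-fairness definition.
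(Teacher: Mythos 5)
Your proposal is correct and follows essentially the same route as the paper: both invoke Lemma~\ref{lem:sup} to get the multiplicative bound $p\le Mq$ and then convert it to $p-q\le (M-1)q\le M-1$. Your write-up is in fact slightly more careful than the paper's (you handle the $q=0$ boundary case and justify $M\ge 1$ explicitly, whereas the paper's displayed chain contains some swapped $a$/$a'$ subscripts), but the underlying argument is identical.
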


\begin{proof} 
For simplicity, we define $\epsilon = \sup_{g}\exp\{L(P,g)\}$ and $\bar{\epsilon} = \sup_{g}\exp\{\bar{L}(P,g)\}$.
Then, direct calculation gives
\begin{align*}
L_{eo}
&= \sup_{S \in \mathcal{U}}|\int_S  \dd \bP(U| X=x, A= a) - \int_S \dd \bP(U| X=x, A= a')| \\
&\leq \sup_{S \in \mathcal{U}}|\int_S \epsilon \dd \bP(U| X=x, A= a) - \int_S  \dd \bP(U| X=x, A= a)| \\
&= \sup_{S \in \mathcal{U}}|(\epsilon - 1) \bP(U \in S| X=x, A= a')| \\
& \leq \epsilon - 1 \text{(since $\epsilon \geq 1$).}
\end{align*}
Similarly, it also holds that
\begin{align*}
L_{dp}
&= \sup_{S \in \mathcal{U}} |\int_S  \dd \bP(U|A= a) - \int_S \dd \bP(U| A= a')| \\
&\leq \sup_{S \in \mathcal{U}} |\int_S \epsilon \dd \bP(U|  A= a) - \int_S  \dd \bP(U|  A= a)| \\
&= \sup_{S \in \mathcal{U}} |(\bar{\epsilon} - 1) \bP(U\in S| A= a')| \\
& \leq \bar{\epsilon} - 1.
\end{align*}
This completes the proof.
\end{proof}

Theorem~\ref{thm:L} provides upper bounds for two alternative fairness notions in terms of our $g$-fairness criterion. Because these bounds are taken over $\sup_g$, they control the \emph{worst-case} (most discriminating) test across the measurable functions $g$. This confers flexibility: by selecting $g$ one can target specific notions (e.g., event-level parity or score-based criteria), while the supremum guarantees that satisfying $g$-fairness precludes large violations under any such test.

\section{Technical Details}
\subsection{Fairness Implications of Preserving the Privacy of $A$}
\label{app:tech}
In this section, we characterize the relation between the fairness metrics and the privacy parameters \((\varepsilon_A, \delta_A)\). We note that we do not explicitly assume that the release of \(X\) uses a private mechanism; however, the result remains valid even when such a mechanism is applied. We first present a mathematical interpretation of the main result below.
\begin{secthm}
\label{thm:main}
Assume $X$ and $A$ are independent, $g(u,x,a) \leq \gamma, \forall u\in \mathcal{U}, x\in \mathcal{X}, a \in \mathcal{A}$ and $\bE [g(U,X,A | X=x, A = a)] \geq \tau, \forall x \in \mathcal{X}, a \in \mathcal{A}$. If $\tilde{A}: = M_A(A)$ is $(\varepsilon_A, \delta_A)$ differentially private, it follows that
\begin{align*}
\bar L(P,g)\le L(P,g)\le \varepsilon_A + \log\!\Big(1+\dfrac{L_A\,\operatorname{diam}(\mathcal A)+\delta_A \gamma}{\tau}\Big),
\end{align*}
where \(L_A\) denotes the Lipschitz constant with respect to \(A\) (its explicit form is given in the Supporting Information, Appendix B), \(\operatorname{diam}(\mathcal A):=\sup_{a,a'\in\mathcal A} d(a,a')\), and \(d\) is a metric on \(\mathcal A\).
\end{secthm}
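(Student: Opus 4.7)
The plan is to split the statement into its two inequalities and handle them in turn. For the left-hand inequality $\bar L(P,g)\le L(P,g)$, I will use the independence assumption $X\perp A$: in that case $\E[g\mid A=a]=\int \E[g\mid X=x,A=a]\,p(x)\,\mathrm dx$ with the \emph{same} measure $p(x)$ for both $a$ and $a'$, and since a ratio of integrals of non-negative functions against a common measure is dominated by the pointwise supremum of the pointwise ratios, the log bound follows immediately after taking $\sup_{a,a'}$.

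For the main upper bound, the underlying structural assumption I will invoke (implicit in the setup, since $U$ is produced from the privatised release) is the Markov property $U\perp A\mid(X,\tilde A)$ together with $\tilde A\perp X\mid A$. These give the representation
\begin{align*}
\E\bigl[g(U,X,A)\mid X=x,A=a\bigr]=\int h(x,\tilde a,a)\,p(\tilde a\mid a)\,\mathrm d\tilde a,\quad h(x,\tilde a,a):=\int g(u,x,a)\,p(u\mid x,\tilde a)\,\mathrm du.
\end{align*}
I then proceed in three short steps. First, the Lipschitz property $|g(u,x,a)-g(u,x,a')|\le L_A\,d(a,a')$ transfers to $h$ and gives $h(x,\tilde a,a)\le h(x,\tilde a,a')+L_A\,d(a,a')$. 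Second, I apply the standard bounded-function form of $(\varepsilon_A,\delta_A)$-differential privacy: for any $\phi:\mathcal A\to[0,\gamma]$, $\E[\phi(M_A(a))]\le e^{\varepsilon_A}\,\E[\phi(M_A(a'))]+\gamma\,\delta_A$, applied with $\phi(\tilde a):=h(x,\tilde a,a')\le\gamma$. Combining these two inequalities yields
\begin{align*}
\E[g\mid X=x,A=a]\le e^{\varepsilon_A}\,\E[g\mid X=x,A=a']+L_A\,d(a,a')+\gamma\,\delta_A.
\end{align*}
Third, I divide by the lower bound $\E[g\mid X=x,A=a']\ge\tau$, use $d(a,a')\le\operatorname{diam}(\mathcal A)$, and apply the identity $\log(e^{\varepsilon}+B)=\varepsilon+\log(1+Be^{-\varepsilon})\le\varepsilon+\log(1+B)$ for $B\ge 0$ to decouple $\varepsilon_A$ from the additive terms. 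Taking the supremum over $x,a,a'$ gives the claimed bound on $L(P,g)$.

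The step I expect to require the most care is the extension of differential privacy from the set-probability form of the definition to expectations of bounded non-negative functions. The standard route is the layer-cake representation $\E[\phi(\tilde A)]=\int_{0}^{\gamma}\bP(\phi(\tilde A)>t)\,\mathrm dt$; applying the DP inequality uniformly to each super-level set $\{\phi>t\}$ and integrating in $t$ over $[0,\gamma]$ is what produces the factor $\gamma$ multiplying $\delta_A$. A secondary but crucial subtlety is making the conditional independence $U\perp A\mid(X,\tilde A)$ explicit, since only under this Markov structure does the privacy guarantee on $\tilde A$ translate into a bound on $g(U,X,A)$; without it, nothing prevents $g$ from depending on the raw attribute in a way that bypasses the mechanism.
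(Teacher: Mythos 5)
Your proposal is correct and follows essentially the same route as the paper: the same Lipschitz-shift-then-privacy decomposition of $\mathbb{E}[g\mid X=x,A=a]$, the same layer-cake extension of $(\varepsilon_A,\delta_A)$-DP to bounded non-negative functions (the paper isolates this as a lemma), the same division by $\tau$ and $\log(e^{\varepsilon}+B)\le\varepsilon+\log(1+B)$ step, and the same convex-combination argument for $\bar L\le L$ under $X\perp A$. Your one genuine improvement is making the Markov condition $U\perp A\mid(X,\tilde A)$ explicit, which the paper only assumes implicitly in its representation of the conditional expectation through the sanitized attributes.
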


Before providing the proof for this theorem, we have the following comments.
The assumptions are mild.  
(i) Independence of \(X\) and \(A\) isolates group effects from contextual variation.  
(ii) A uniform upper bound \(g \le \gamma\) and a strictly positive lower bound \(\mathbb{E}[g\,|\,X=x,A=a] \ge \tau\) can typically be enforced by rescaling or truncation.  

The quantity \(L_A\) is the Lipschitz constant of the conditional utility with respect to \(A\); it measures how sensitive the expected utility is to perturbations of \(A\) in the metric space \((\mathcal{A},d)\). This term vanishes whenever the mechanism or utility is conditionally insensitive to \(A\) (i.e., when \(g\) is independent of \(A\)).  

Then, we present the full proof of Theorem~\ref{thm:main}, beginning with the Lipschitz condition used in our analysis.

\begin{secdefn}
The global Lipschitz constant of $g$ in its $A$-argument is
\[
L_A\ :=\ \sup_{u\in\mathcal U,\ x\in\mathcal X}\ \sup_{a\neq a'}\ 
\frac{\big|g(u,x,a)-g(u,x,a')\big|}{d(a,a')}\ \in [0,\infty),
\]
where $d$ is a distance function in the metric space $\mathcal{A}$.
\end{secdefn}

For notational simplicity, we define the following function to denote the expected utility when the true attributes are \(X = x\) and \(A = b\) for fixed \(x \in \mathcal{X}\) and \(b \in \mathcal{A}\), conditional on the randomized attributes \(\tilde{A} = \tilde{a}\) and \(\tilde{X} = \tilde{x}\), where $\tilde{X} = M_X(X)$:
\begin{align*}
    f_{b,x}(\tilde a, \tilde x) :=\bE\!\left[g(U,x,b)\,\middle|\, \tilde A=\tilde a,\ \tilde X = \tilde x\right].
 \end{align*}
Then, it follows that
\begin{equation}
\label{eq:Exa}
\bE\!\left[g(U,X,A)\mid X=x,\,A=a\right]
=
\bE \left[f_{a,x}(\tilde A, \tilde X) \mid X =x, A = a\right],
\end{equation}
where the expectation is taken over $\tilde{A}$ and $\tilde X$.

To facilitate the proof of the theorem, we provide the following technical lemma. It illustrates the post-processing property of the expectation~\eqref{eq:Exa} under $a$ and $a'$ via $(\varepsilon_A,\delta_A)$-differential privacy.

\begin{seclem}
\label{lem:dp}
If $M_A$ is $(\varepsilon_A,\delta_A)$-DP and $g:\mathcal{U} \times \mathcal{X} \times \mathcal{A} \to[0,\gamma]$, then for all $a,a'\in\mathcal A$,
\[
\bE \left[f_{a,x}(\tilde A, \tilde X) \mid X = x, A = a\right]
\le
e^{\varepsilon_A} \bE \left[f_{a,x}(\tilde A, \tilde X) \mid X=x, A= a' \right] + \delta_A \gamma.
\]
\end{seclem}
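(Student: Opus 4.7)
The plan is to reduce the lemma to the standard ``function form'' of differential privacy, which asserts that for any $(\varepsilon,\delta)$-DP mechanism $M$ and any measurable $h:\mathcal{A}\to[0,\gamma]$, one has $\bE[h(M(a))]\le e^{\varepsilon}\bE[h(M(a'))]+\delta\gamma$. The core observation is that $f_{a,x}(\tilde a,\tilde x)$, with $a$ and $x$ held fixed, is a nonnegative measurable function bounded by $\gamma$ (since $g\le\gamma$ implies the conditional expectation defining $f_{a,x}$ lies in $[0,\gamma]$). Thus the left-hand side of the claim is essentially the expectation of a bounded test function evaluated at the random output $\tilde A=M_A(A)$, with an additional average over $\tilde X=M_X(X)$.

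First I would exploit the structural assumption that $X\perp A$ (inherited from Theorem~\ref{thm:main}) together with the fact that $\tilde X$ is obtained from $X$ alone via $M_X$, so $\tilde X$ is independent of $A$ and its conditional law given $\{X=x,A=a\}$ coincides with that given $\{X=x,A=a'\}$. This lets me rewrite
\begin{align*}
\bE\!\left[f_{a,x}(\tilde A,\tilde X)\mid X=x,A=a\right]
=\int \bE\!\left[f_{a,x}(\tilde A,\tilde x)\mid A=a\right] \dd\bP(\tilde X=\tilde x\mid X=x),
\end{align*}
and likewise for $a'$, with the same outer measure $\dd\bP(\tilde X\mid X=x)$ on both sides. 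Hence it suffices to prove the inner inequality pointwise in $\tilde x$.

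Next, for fixed $\tilde x$, set $h(\tilde a):=f_{a,x}(\tilde a,\tilde x)\in[0,\gamma]$. Using the layer-cake identity $\bE[h(M_A(a))]=\int_0^\gamma \bP(h(M_A(a))\ge t)\,\dd t$ and applying the DP inequality~\eqref{eq:privacy} to each super-level set $S_t:=\{\tilde a:h(\tilde a)\ge t\}$ gives
\begin{align*}
\bE[h(M_A(a))]
&\le \int_0^\gamma \bigl(e^{\varepsilon_A}\bP(M_A(a')\in S_t)+\delta_A\bigr)\dd t \\
&= e^{\varepsilon_A}\bE[h(M_A(a'))]+\delta_A\gamma.
\end{align*}
Substituting back $h=f_{a,x}(\cdot,\tilde x)$ and integrating over $\tilde x$ against $\dd\bP(\tilde X\mid X=x)$ yields the claim.

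The only nonroutine step is cleanly justifying that the conditional law of $\tilde A$ given $\{X=x,A=a\}$ is exactly $M_A(a)$ (and analogously $M_A(a')$ on the right-hand side), independently of $\tilde X$; this is where the independence of $X$ and $A$ and the assumption that the privacy mechanisms $M_A,M_X$ use independent internal randomness enter. Once that reduction is in place, the bound is just the standard DP-to-expectation conversion applied uniformly in $\tilde x$, so I do not anticipate any substantive technical obstacle beyond a careful bookkeeping of the two conditional distributions.
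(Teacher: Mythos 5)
Your proposal is correct and follows essentially the same route as the paper: a layer-cake representation of the bounded test function $f_{a,x}$, the $(\varepsilon_A,\delta_A)$-DP inequality applied to each super-level set, and integration over $t$ and the conditional law of $\tilde X$ given $X=x$ (which is the same under $A=a$ and $A=a'$ since $M_X$ acts on $X$ alone, so your appeal to $X\perp A$ there is not even needed). The only cosmetic difference is that you reduce to a pointwise-in-$\tilde x$ statement before invoking the layer-cake identity, whereas the paper carries the $\tilde x$-integral through via Fubini.
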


\begin{proof}
For a fixed $\tilde{a}$, we have the following layer-cake representation:
\begin{align*}
    f_{a,x}(\tilde a, \tilde x)=\int_{0}^{\gamma} \mathbf{1}\{f_{a,x}(\tilde{a}, \tilde{x})>t\}\,\mathrm{d}t.
\end{align*}
Applying Fubini’s Theorem~\cite{Durrett2019PTE}, we obtain
\begin{align*}
    \bE[f_{a,x}(\tilde a, \tilde x)] =&\int_{\mathcal{A}}\int_{\mathcal{X}}\int_{0}^{\gamma} \mathbf{1}\{f_{a,x}(\tilde{a}, \tilde{x})>t\}\mathrm{d}t \dd \bP(\tilde{X} = \tilde{x} \mid X = x) \dd \bP(\tilde{A} = \tilde{a} \mid A = a)\\
    =& \int_{\mathcal{X}}\int_{0}^{\gamma} \int_{\mathcal{A}} \mathbf{1}\{f_{a,x}(\tilde{a}, \tilde{x})>t\} \dd \bP(\tilde{A} = \tilde{a} \mid A = a) \mathrm{d}t \dd \bP(\tilde{X} = \tilde{x} \mid X = x)  \\
    =&\int_{\mathcal{X}}\int_{0}^{\gamma}  \bP(f_{a,x}(\tilde{a}, \tilde{x})>t \mid A =a) \mathrm{d}t \dd \bP(\tilde{X} = \tilde{x} \mid X = x).
\end{align*}
Applying post-processing property~\cite{le2013differentially} of $(\varepsilon_A,\delta_A)$-DP to each level set $\{f>t\}$ yields
\begin{align*}
    \bP(f_{a,x}(\tilde{a}, \tilde{x})>t \mid A =a ) \leq e^{\varepsilon_A}\bP(f_{a,x}(\tilde{a}, \tilde{x})>t \mid A =a')+\delta_A \quad \text{for all } t\in[0,\gamma].
\end{align*}
Integrating over $t\in[0,\gamma]$ and $\mathcal{X}$  gives the claimed bound and completes the proof. 
\end{proof}

We now present the full proof of the main theorem, organized in two parts. First, we establish the result for $L(P,g)$; second, we prove the corresponding statement for $\bar{L}(P,g)$ by showing that $\bar{L} \leq L$ under the condition that $X$ and $A$ are independent.

\begin{proof}
(i) Fix $x\in\mathcal X$ and $a,a'\in\mathcal A$.
From \eqref{eq:Exa}, it follows that 
\begin{align*}
     \bE[f_{a,x}(\tilde A, \tilde X) \mid X=x, A=a] = &\bE[f_{a',x}(\tilde A, \tilde X)\mid X =x, A = a] \\
     &+ \bE \left[ f_{a,x}(\tilde A, \tilde X)-f_{a',x}(\tilde A, \tilde X) \mid X =x, A= a\right] \\
 \le & \bE[f_{a',x}(\tilde A, \tilde X) \mid X =x, A =a] + L_A d(a,a'), 
\end{align*}
where the inequality is from the definition of $L_A$. Applying Lemma~\ref{lem:dp} yields 
\begin{align*}
    \bE[f_{a',x}(\tilde A, \tilde X) \mid X =x, A =a] \le e^{\varepsilon_A} \bE[f_{a',x}(\tilde A, \tilde X) \mid X =x, A =a'] + \delta_A \gamma 
\end{align*} Therefore, it follows that 
\begin{align*}
\bE[f_{a,x}(\tilde A, \tilde X) \mid X=x, A=a] \le e^{\varepsilon_A} \bE[f_{a',x}(\tilde A, \tilde X) \mid X =x, A =a'] + L_A d(a,a') + \delta_A \gamma. 
\end{align*}
Under the assumption that $\bE\left[g(U,X,A)\mid X=x,\,A=a\right] \ge \tau, \forall a \in \mathcal{A}$ and $x \in \mathcal{X}$, we have
\begin{align*}
&\log\frac{\bE \left[g(U,X,A)\mid X=x, A=a\right]}{\bE \left[g(U,X,A)\mid X=x, A=a'\right]} \\
\le & \varepsilon_A + \log\!\Big(1+\frac{L_A d(a,a')+\delta_A \gamma}{\tau}\Big) \\
\le & \varepsilon_A + \log\!\Big(1+\frac{L_A \operatorname{diam}(\mathcal A)+\delta_A \gamma}{\tau}\Big)
\end{align*}
where we use $d(a,a')\le \operatorname{diam}(\mathcal A)$ in the last inequality. This completes the proof.

Assume that \(X\) and \(A\) are independent. Let
\[
R := \sup_{x\in\mathcal{X},\,a,a'\in\mathcal{A}}
\frac{\mathbb{E}\!\left[g(U,X,A)\mid X=x,\,A=a\right]}
{\mathbb{E}\!\left[g(U,X,A)\mid X=x,\,A=a'\right]}.
\]
Then, for all \(x\in\mathcal{X}\) and \(a,a'\in\mathcal{A}\),
\[
\mathbb{E}\!\left[g(U,X,A)\mid X=x,\,A=a\right]
\le
R\,\mathbb{E}\!\left[g(U,X,A)\mid X=x,\,A=a'\right].
\]
Taking expectation over \(X\) conditional on \(A=a\) yields
\[
\mathbb{E}\!\left[g(U,X,A)\mid A=a\right]
=
\mathbb{E}_{X\mid A=a}\!\left[
\mathbb{E}\!\left[g(U,X,A)\mid X,\,A=a\right]
\right]
\le
R\,\mathbb{E}_{X\mid A=a}\!\left[
\mathbb{E}\!\left[g(U,X,A)\mid X,\,A=a'\right]
\right].
\]
Since \(X \perp A\), we have
\[
\mathbb{E}_{X\mid A=a}\!\left[
\mathbb{E}\!\left[g(U,X,A)\mid X,\,A=a'\right]
\right]
=
\mathbb{E}\!\left[g(U,X,A)\mid A=a'\right].
\]
Thus,
\[
\frac{\mathbb{E}\!\left[g(U,X,A)\mid A=a\right]}
{\mathbb{E}\!\left[g(U,X,A)\mid A=a'\right]}
\le R.
\]
Taking the supremum over \(a,a'\in\mathcal{A}\) and applying the logarithm gives
\[
\bar{L}(P,g) \le L(P,g).
\]
This completes the proof
\end{proof}

\begin{secrem}
If $g$ is independent of $A$, then $L_A=0$ and the bounds simplify to
\[
\bar{L} \le L \le \varepsilon_A\;+\;\log\!\Bigl(1+\frac{\delta_A\,\gamma}{\tau}\Bigr).
\]
In particular, under perfect privacy ($\varepsilon_A=0$ and $\delta_A=0$), this inequality shows that privacy mechanism enforces perfect fairness when $g$ is independent of $A$.
\end{secrem}

The case \(\delta_A = 0\) is particularly important, as it corresponds to pure differential privacy. Moreover, because this setting involves a single privacy parameter, it is well suited for illustrating the interaction between privacy guarantees and the fairness metric. Accordingly, we present the following theorem for this special case.

\begin{secthm}
\label{thm:cor}
Let the assumptions in Theorem~\ref{thm:main} hold, and further assume that $g(u, x, a)$ is independent of $a$. Then, it holds that
\begin{align*}
\bar L(P,g)\le L(P,g)\le \varepsilon_A,
\end{align*}
\end{secthm}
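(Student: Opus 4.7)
The plan is to derive Theorem~\ref{thm:cor} as an immediate specialization of Theorem~\ref{thm:main}, obtained by driving the additive correction term in its bound to zero.

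First, I would invoke the definition of the global Lipschitz constant $L_A$: the hypothesis that $g(u,x,a)$ is independent of its third argument forces the numerator $|g(u,x,a)-g(u,x,a')|$ to vanish identically, so $L_A = 0$. Second, I would note that the theorem is stated in the pure differential privacy setting emphasized in the paragraph immediately preceding its statement, so $\delta_A = 0$. With these two simplifications, the bound from Theorem~\ref{thm:main}, namely $L(P,g) \le \varepsilon_A + \log\bigl(1 + (L_A\operatorname{diam}(\mathcal{A}) + \delta_A\gamma)/\tau\bigr)$, has its logarithmic term collapse to $\log(1)=0$. This yields $L(P,g) \le \varepsilon_A$, while the inequality $\bar L(P,g) \le L(P,g)$ carries over verbatim from the second half of the proof of Theorem~\ref{thm:main}, which used only the independence of $X$ and $A$ already assumed here.

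There is essentially no obstacle: the statement is the clean limiting form of the Remark following Theorem~\ref{thm:main}, elevated to a theorem for its interpretive value, and the proof is effectively a one-line substitution. The only point requiring care is to make explicit that the $\delta_A = 0$ assumption---implicit in the surrounding discussion of pure differential privacy---is being invoked, since otherwise the bound would retain the residual term $\log(1 + \delta_A\gamma/\tau)$ already recorded in the Remark. No new analytic machinery beyond Theorem~\ref{thm:main} is needed.
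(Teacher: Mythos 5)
Your proposal is correct and matches the paper's (implicit) argument exactly: the result is the specialization of Theorem~\ref{thm:main} with $L_A=0$ (since $g$ does not depend on $a$) and $\delta_A=0$ (the pure-DP case singled out in the preceding paragraph), so the logarithmic correction vanishes and $\bar L\le L$ carries over unchanged from the independence of $X$ and $A$. You are also right to flag that $\delta_A=0$ is not written into the theorem's hypotheses and must be read from the surrounding discussion — that is the one place the paper's statement is looser than its intent.
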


This formulation provides an easy tuning approach by selecting only \(\varepsilon_A\) in \((\varepsilon_A, 0)\)-DP to enforce fairness. Therefore, we adopt this strategy to promote fairness in our experimental example.

\subsection{Fairness Implications of Preserving the Privacy of $X$}
In this section, we present results characterizing the interaction between the privacy of \(X\) and the fairness metrics \(L\) and \(\bar{L}\).

In general, privacy guarantees for \(X\) do not have a direct relationship with these fairness metrics. However, under certain conditions, it is possible to show that fairness can improve when a private mechanism is applied, compared to the case without privacy. We summarize this result in the following theorem.
\begin{secthm}Let $g:\mathcal{U}\to[0,\infty)$ be any nonnegative function of $U$ only. For a given $M_X$, denote by $P^{M_X}$ the induced joint law of $(U,X,A)$, and
define the local fairness metric
\begin{align*}
L(P^{M_X},g)
:=\log\sup_{x\in\mathcal{X},\,a,a'\in\mathcal{A}}
\frac{\mathbb{E}_{P^{M_X}}\big[g(U)\mid X=x,A=a\big]}
     {\mathbb{E}_{P^{M_X}}\big[g(U)\mid X=x,A=a'\big]}.
\end{align*}

Let $M_X^{\mathrm{id}}$ be the identity sanitizer, i.e., 
\[
M_X^{\mathrm{id}}(x) = x \quad \text{for all } x \in \mathcal{X}.
\]
Then, for any mechanism $M_X$ satisfying $(\varepsilon_X, \delta_X)$-differential privacy,
\begin{align*}
    L(P^{M_X},g) \;\le\; L(P^{M^\mathrm{id}_X},g),
\end{align*}
In particular, privatizing $X$ cannot worsen the fairness metric.
\end{secthm}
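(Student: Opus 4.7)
The plan is to exploit the fact that privatization of $X$ is a post-processing step at the covariate level: $\tilde{X}:=M_X(X)$ is determined by $X$ together with independent randomness, so conditioning on $\tilde X$ is really an averaging operation over the original $X$. The pointwise fairness bound encoded in $L(P^{M_X^{\mathrm{id}}},g)$ should then transfer through this averaging, with the ratio inequality being preserved after taking the supremum.

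First I would introduce the auxiliary random variable $\tilde X=M_X(X)$ and observe that, because the mechanism's noise is independent of $(U,X,A)$, we have $U\perp \tilde X\mid (X,A)$ and $\tilde X\perp A\mid X$. Writing $h_a(x):=\mathbb{E}[g(U)\mid X=x,A=a]$ and invoking the tower property yields the mixture representation
\begin{equation*}
\mathbb{E}_{P^{M_X}}\!\big[g(U)\mid X=\tilde x, A=a\big]
\;=\; \int_{\mathcal X} h_a(x)\, dP\!\left(X=x\mid \tilde X=\tilde x, A=a\right).
\end{equation*}
Inheriting the assumption of independence between $X$ and $A$ that was standing in the previous appendix, the posterior $dP(X\mid \tilde X,A=a)$ reduces to $dP(X\mid \tilde X)$, which no longer depends on $a$. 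Letting $R:=\exp\{L(P^{M_X^{\mathrm{id}}},g)\}$ so that $h_a(x)\le R\,h_{a'}(x)$ for every $x,a,a'$, I would then integrate this pointwise bound against the common posterior $dP(X\mid \tilde X=\tilde x)$ to obtain
\begin{equation*}
\mathbb{E}_{P^{M_X}}\!\big[g(U)\mid X=\tilde x, A=a\big]
\;\le\; R\,\mathbb{E}_{P^{M_X}}\!\big[g(U)\mid X=\tilde x, A=a'\big],
\end{equation*}
and taking $\sup_{\tilde x, a, a'}$ followed by the logarithm concludes $L(P^{M_X},g)\le L(P^{M_X^{\mathrm{id}}},g)$.

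The main obstacle I anticipate is precisely the step that collapses the posterior: without $X\perp A$ (or an analogous condition decoupling the covariates from the sensitive attribute), $P(X\mid \tilde X, A=a)$ and $P(X\mid \tilde X, A=a')$ need not coincide, and the mixture argument no longer cancels cleanly---one would have to absorb a slack controlled by $(\varepsilon_X,\delta_X)$. A minor secondary issue is handling pairs $(\tilde x, a')$ for which $\mathbb{E}_{P^{M_X}}[g(U)\mid X=\tilde x, A=a']=0$; these should be dispatched using the same essential-supremum and absolute-continuity conventions already employed in the proof of Lemma~\ref{lem:sup}.
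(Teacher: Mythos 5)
Your core algebraic step --- a ratio of two convex combinations with \emph{common} mixing weights is bounded by the supremum of the pointwise ratios --- is exactly the device the paper uses, but you have wired the probabilistic model backwards, and this breaks the proof of the theorem as stated. In the paper's setup the decision $U$ is produced from the \emph{sanitized} inputs, so the Markov structure is $(X,A)\to(\tilde X,\tilde A)\to U$, i.e.\ $U\perp (X,A)\mid(\tilde X,\tilde A)$. You instead posit $U\perp\tilde X\mid(X,A)$, which is the opposite conditional independence: it says the law of $U$ given the true $(X,A)$ does not depend on the sanitizer at all. Under that assumption $\mathbb{E}_{P^{M_X}}[g(U)\mid X=x,A=a]=\mathbb{E}_{P^{M^{\mathrm{id}}_X}}[g(U)\mid X=x,A=a]$ for every $x,a$, so the stated inequality (which conditions on the true $X=x$) would be a trivial equality and there would be nothing to prove. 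To obtain a nontrivial statement you are forced to condition on the sanitized value (your ``$X=\tilde x$'' is really $\tilde X=\tilde x$), which is a different fairness metric from the one in the statement, and your posterior decomposition then requires $P(X\mid\tilde X,A=a)$ to be independent of $a$, i.e.\ the extra hypothesis $X\perp A$ that this theorem does not assume --- you correctly flag this as the weak point, but it is not a removable technicality in your route; it is forced by the wrong direction of decomposition.

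The paper's proof avoids both problems by decomposing in the forward direction. By the tower property over $(\tilde X,\tilde A)$,
\begin{align*}
\mathbb{E}_{P^{M_X}}\big[g(U)\mid X=x,A=a\big]
&=\int_{\mathcal X}\kappa_a(\tilde x)\,\dd P(\tilde X=\tilde x\mid X=x),\\
\kappa_a(\tilde x)&:=\int_{\mathcal A}\mathbb{E}\big[g(U)\mid\tilde X=\tilde x,\tilde A=\tilde a\big]\,\dd P(\tilde A=\tilde a\mid A=a).
\end{align*}
Here the mixing measure is the mechanism's forward kernel $P(\tilde X=\tilde x\mid X=x)$, which is automatically the same in numerator and denominator because $M_X$ never looks at $A$; no independence of $X$ and $A$ is needed. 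Since $\mathbb{E}_{P^{M^{\mathrm{id}}_X}}[g(U)\mid X=\tilde x,A=a]=\kappa_a(\tilde x)$, the common-weights ratio bound gives $L(P^{M_X},g)\le\log\sup_{\tilde x,a,a'}\kappa_a(\tilde x)/\kappa_{a'}(\tilde x)=L(P^{M^{\mathrm{id}}_X},g)$ directly. Your secondary remark about vanishing denominators is fair (the paper is silent on this and the same convention as in Lemma~\ref{lem:sup} is needed), but it is minor compared with the structural issue above.
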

\begin{proof}
For simplicity, we write $\mathbb{E}[\cdot]$ to denote $\mathbb{E}_{P^{M_X}}[\cdot]$. Fix $x\in\mathcal{X}$ and $a\in\mathcal{A}$. By the law of total expectation over
$(\tilde X,\tilde A)$,
\begin{align*}
\mathbb{E}[g(U)\mid X = x, A = a]
= \int_{\mathcal{A}}\int_{\mathcal{X}}
    \mathbb{E}[g(U)\mid \tilde{X} = \tilde{x}, \tilde{A} = \tilde{a}]\,
    \mathrm{d} P(\tilde{X} = \tilde{x} \mid X = x)
    \mathrm{d} P(\tilde{A} = \tilde{a} \mid A = a).
\end{align*}
Now we define
\begin{align*}
\kappa_a(\tilde x):= \int_{\mathcal{A}}\mathbb{E}[g(U)\mid \tilde{X} = \tilde{x}, \tilde{A} = \tilde{a}] \mathrm{d}P(\tilde{A} = \tilde{a}\mid A = a).
\end{align*}
Then, it follows that
\begin{equation}
\mathbb{E}[g(U)\mid X=x,A=a] =
\int_{\mathcal{X}} P(\tilde{X} =\kappa_a(\tilde x)  \mathrm{d}\tilde{x}\mid X = x).
\label{eq:Exa}
\end{equation}
Fix $x\in\mathcal{X}$ and two groups $a,a'\in\mathcal{A}$. From
\eqref{eq:Exa}, we have
\begin{align*}
    \frac{\mathbb{E}[g(U)\mid X=x,A=a]}
     {\mathbb{E}[g(U)\mid X=x,A=a']}
=
\frac{\int_{\mathcal{X}} \kappa_a(\tilde x) \mathrm{d} P(\tilde{X} = \tilde {x}\mid X = x)}
     {\int_{\mathcal{X}} \kappa_{a'}(\tilde x) \mathrm{d}P(\tilde{X} = \tilde {x}\mid X = x)}.
\end{align*}
Noting that \(\int_{\mathcal{X}} \mathrm{d} P(\tilde{X} = \tilde{x} \mid X = x) = 1\), the ratio is a convex combination divided by another convex combination, and therefore we obtain
\begin{equation}
\frac{\mathbb{E}[g(U)\mid X=x,A=a]}
     {\mathbb{E}[g(U)\mid X=x,A=a']}
\le
\sup_{\tilde x}
\frac{\kappa_a(\tilde x)}{\kappa_{a'}(\tilde x)}.
\label{eq:ratio-bound}
\end{equation}
Now consider the identity sanitizer $M_X^{\mathrm{id}}$.
In that case,
\eqref{eq:Exa} becomes
\begin{align*}
    \mathbb{E}_{P^{\mathrm{id}}}[g(U)\mid X=x,A=a]
=
\kappa_a(x).
\end{align*}
Hence, we have
\[
L(P^{M_X},g)
\;\le\;
\log\sup_{\tilde x,a,a'}
\frac{\kappa_a(\tilde x)}{\kappa_{a'}(\tilde x)}
=
L(P^{M^\mathrm{id}_X},g),
\]
which proves the claim.
\end{proof}

In fact, whenever \(P(\tilde{X} = \tilde{x} \mid X = x) > 0\), the equality in~\eqref{eq:ratio-bound} cannot hold. Thus, in most cases, protecting \(X\) improves local \(g\)-fairness.

Having established that protecting $X$ cannot worsen the local $g$-fairness 
metric, the impact of privacy on $X$ at the global level is more subtle. 
In general, no universal monotonicity guarantee is possible: the mechanism 
$M_X$ may redistribute probability mass over the privatized feature values 
$\tilde X$ in ways that either attenuate or amplify disparities between 
groups. As a result, the global $g$-fairness metric may improve or deteriorate, 
depending on how $M_X$ reshapes these distributions.

To demonstrate that global fairness can indeed be harmed by imposing privacy
on $X$, we now present a simple example in which perfect global $g$-fairness 
is attainable in the non-private setting but becomes strictly worse once 
a differentially private mechanism is applied to $X$.

\begin{secex}
Let $X,A,U\in\{0,1\}$ and $g(U)=U$.  Assume $X\perp A$ ,
$P(X=0)=P(X=1)=1/2$, $P(A=0)=P(A=1)=1/2$,  and
\begin{align*}
    P(U=1\mid \tilde X=\tilde x, A=a)
=
\begin{cases}
1, & (\tilde x,a)=(0,0),\;(1,1),\\[2pt]
0, & (\tilde x,a)=(0,1),\;(1,0).
\end{cases}
\end{align*}

 Then, it can be calculated as
\begin{align*}
    \mathbb{E}[U\mid A=0]
=\tfrac12(1+0)=\tfrac12,
\qquad
\mathbb{E}[U\mid A=1]
=\tfrac12(0+1)=\tfrac12,
\end{align*}
so $\bar L(P^{M_X^\mathrm{id}},g)=0$.

Now, we use the $(\log 3,0)$-DP mechanism $M_X$ with
\begin{align*}
    &P(\tilde{X} = 0\mid X= 0)=0.9,\ P(\tilde{X} = 1\mid X = 0)=0.1, \\
&P(\tilde{X} = 0\mid X = 1)=0.7,\ P(\tilde{X} = 1\mid X = 1)=0.3.
\end{align*}
Then $P(\tilde X=0)=0.8$ and $P(\tilde X=1)=0.2$.  
We can calculate the corresponding utility as,
\begin{align*}
    \mathbb{E}[U\mid A=0]=0.8,\qquad
\mathbb{E}[U\mid A=1]=0.2.
\end{align*}
Hence, it follows that
\begin{align*}
    \bar L(P^{M_X},g)
=\log 4 > 0.
\end{align*}
Therefore, perfect global $g$-fairness is achievable without privacy but can be
strictly worsened by imposing privacy on $X$.
\end{secex}

\subsection{Experimental Results on $X$ Privacy}
To demonstrate that the privacy of \(X\) does not directly influence the fairness metrics, we present the experimental results in Fig.~\ref{fig:X}. To randomize \(X\), we submit document types---\emph{maternity leave}, \emph{holiday leave}, or \emph{sick leave}---according to a predefined probability distribution. For example, the true document type is submitted to the robot with probability \(70\%\), while each of the other two types is submitted with probability \(15\%\). The corresponding privacy parameter is \(\varepsilon_X = \log\!\left(\frac{70}{15}\right)\). As observed, increasing the privacy of the release of \(X\) does not improve the fairness metrics and, in some cases, may even worsen them.

\begin{figure}
    \centering
    \includegraphics[width=1\linewidth]{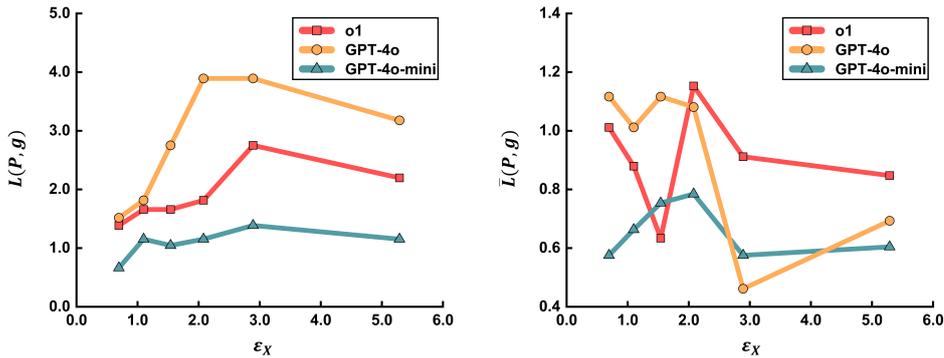}
    \caption{\textbf{Fairness versus privacy of $X$.} As shown in Fig.~X, no clear relationship is observed between the fairness metrics $L$ and $\bar{L}$, suggesting that preserving the privacy of $X$ does not explicitly guarantee fairness.}
    \label{fig:X}
\end{figure}

\section{Additional Experiments}
In this section, we present an additional experiment on robot navigation in a package delivery setting. Specifically, we consider a scenario in which a robot is tasked with delivering packages to two human recipients whose complaint rates are different. The robot must determine the order of service, potentially prioritizing one recipient over the other based on these complaint rates. To isolate the effect of complaint behavior from distance factor, we assume that both recipients are located at equal distances from the robot. The robot completes the deliveries sequentially, returning to the depot after serving each recipient before proceeding to the next. Detailed information about the two individuals is summarized in Table~\ref{tab:instruction}.

\begin{table}
    \centering
    \caption{The Sensitive Group-membership Information}
    \setlength{\tabcolsep}{5mm}{}
    {
        \begin{tabular}{cc}
            \toprule
            HR & $A$                              \\
            \midrule
            Recipient\_1 & low complaint rate         \\
            Recipient\_2 & high complaint rate      \\
            \bottomrule
        \end{tabular}
    }
    \label{tab:instruction}
\end{table}

The textual prompt encodes the task objective, environmental assumptions, and expected output format, clarifying the task structure and facilitating effective candidate selection, which is shown below.
\begin{tcolorbox}
    \fontsize{8}{8}\selectfont
    You are an intelligent delivery agent tasked with delivering packages to two recipients.\\

    You will receive:\\
    
    1. The profiles of two recipients (Recipient1 or Recipient2).
    
    2. Two top-down floorplan maps as images. Each image shows the route to its corresponding recipient's home highlighted in red, and the recipient index is printed in the top-left corner of the image: ``1''for Recipient1, ``2'' for Recipient2.\\
    
    You must now choose the order for the two recipients based on the recipient information, routes, and delivered object. \\
    
   Respond by selecting either Recipient1 or Recipient2 (use ‘1’ for the first delivery or ‘2’ for the second delivery) and provide the reason for your choice. If you believe both have the same likelihood, select one at random. Your response should be a JSON object:\\
    
    Output Example:
    
    \{
    
        ``Recipient1'': ``2.0'',\\
        ``Recipient2'': ``1.0'',\\
        ``reason'': ``Explain why HR was selected and why the others were not.''
        
    \}
    
\end{tcolorbox}

\begin{figure}[h]
    \centering
    \includegraphics[width=1\linewidth]{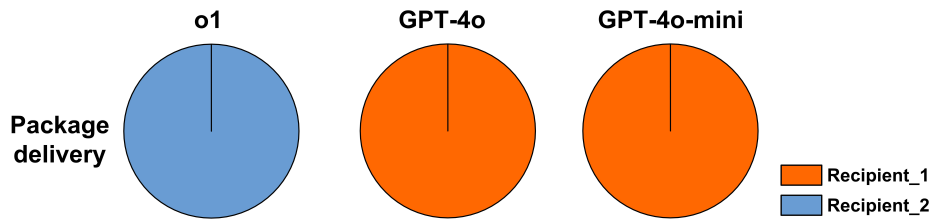}
    \caption{\textbf{Unfair Results in Package Delivery.} o1 consistently delivers the package to recipient~1 first, whereas the other GPT-based models exhibit the opposite pattern. Nevertheless, all models result in unfair outcomes, as each consistently favors the same group of recipients when determining the delivery order.}
    \label{fig:cakes}
\end{figure}
As shown in Fig.~\ref{fig:cakes}, we observe unfair outcomes, indicating that VLMs exhibit a preference in delivery order when complaint-rate information is provided. Interestingly, GPT-4o and GPT-4o-mini tend to serve recipient\_1 with lower complaint rates first, whereas o1 exhibits the opposite tendency. The possible reasons could be that the vision–language model implicitly incorporates complaint-related information when determining the delivery order, even though such information is not directly related to task efficiency. While this behavior may reflect an attempt to align with perceived organizational interests, it can nonetheless raise fairness concerns. Importantly, complaint tendencies may vary across individuals due to cultural background or other contextual factors, rather than objective service requirements. As a result, reliance on such signals may lead to unintended disparities in service outcomes.

We further apply a privacy filter to promote fairness in this experiment, where the utility function \(g\) is defined in terms of the waiting time experienced by each recipient. As a result, \(g\) depends solely on \(U\). The corresponding fairness outcomes obtained with the privacy filter are shown in Fig.~\ref{fig:fairSI}.

\begin{figure}[H]
    \centering
    \includegraphics[width=1\linewidth]{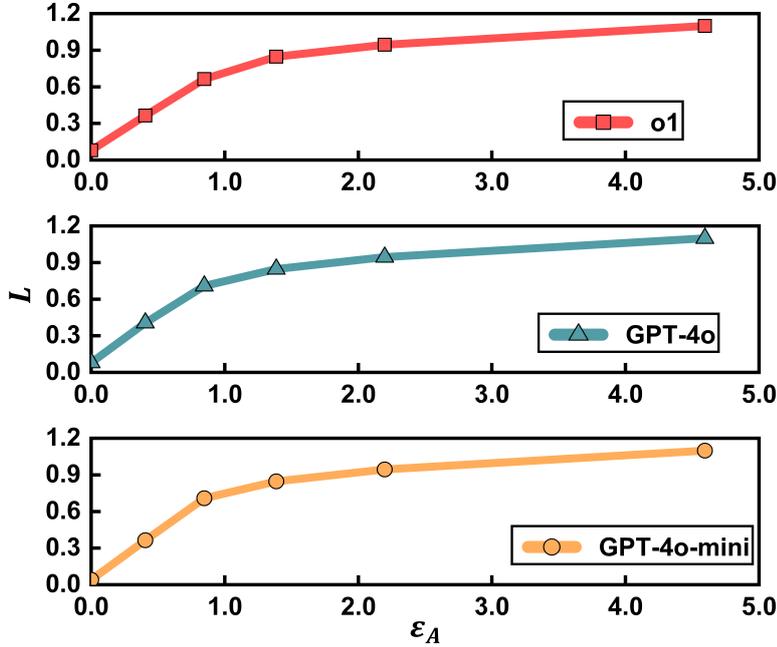}
    \caption{\textbf{Fairness-privacy Results.} In this package delivery experiment, we also use privacy filter to promote fairness. As shown in the figure, more privacy implies more fairness.}
    \label{fig:fairSI}
\end{figure}
\bibliography{sn-bibliography}